\documentclass[10pt,twocolumn,letterpaper]{article}

\usepackage[pagenumbers]{cvpr} % To force page numbers, e.g. for an arXiv version

\definecolor{cvprblue}{rgb}{0.21,0.49,0.74}
\usepackage[pagebackref,breaklinks,colorlinks,allcolors=cvprblue]{hyperref}
\usepackage{tikz}
\usepackage{algorithm}
\usepackage{algorithmic}
\usepackage[table]{xcolor}
\usepackage{tabularx}  
\usepackage{amsmath} 
\usepackage{amsthm}  
\usepackage{graphicx}
\usepackage{booktabs}  
\usepackage{mathrsfs} 
\usepackage{enumitem}
\usepackage{bm}
\newtheorem{theorem}{Theorem}  
  
\usepackage{amssymb}  
\usepackage{multirow}
\newtheorem{lemma}{Lemma}
\newtheorem{definition}{Definition}
\newtheorem{example}{Example}
\usepackage[mathscr]{eucal}
\usepackage{amssymb}
\usepackage{pifont}
\usepackage{subcaption}

\newcommand{\dashfbox}[1]{%
  \tikz[baseline=(c.base)]{
    \node[draw,dashed,inner sep=3pt](c){$#1$};
  }%
}

\def\paperID{} % *** Enter the Paper ID here
\def\confName{CVPR}
\def\confYear{2026}

\title{Hierarchical Action Learning for Weakly-Supervised Action Segmentation}

\author{Junxian Huang$^{1}$,\quad Ruichu Cai$^{1, *}$, \quad Hao Zhu$^{1}$, \quad Juntao Fang$^{1}$, \quad Boyan Xu$^{1}$ 
\\ \quad Weilin Chen$^{1}$, \quad Zijian Li$^{2, 3}$, \quad Shenghua Gao$^{4}$\\
$^1$Guangdong University of Technology\\
$^2$Carnegie Mellon University\\
$^3$Mohamed bin Zayed University of Artificial Intelligence\\
$^4$University of Hong Kong\\
{\tt\small huangjunxian459@gmail.com,  leizigin@gmail.com}}
\begin{document}
\maketitle
\def\thefootnote{*}\footnotetext{Corresponding authors.}\def\thefootnote{\arabic{footnote}}
\begin{abstract}
Humans perceive actions through key transitions that structure actions across multiple abstraction levels, whereas machines, relying on visual features, tend to over-segment. This highlights the difficulty of enabling hierarchical reasoning in video understanding. Interestingly, we observe that lower-level visual and high-level action latent variables evolve at different rates, with low-level visual variables changing rapidly, while high-level action variables evolve more slowly, making them easier to identify. Building on this insight, we propose the Hierarchical Action Learning (\textbf{HAL}) model for weakly-supervised action segmentation. Our approach introduces a hierarchical causal data generation process, where high-level latent action governs the dynamics of low-level visual features. To model these varying timescales effectively, we introduce deterministic processes to align these latent variables over time. The \textbf{HAL} model employs a hierarchical pyramid transformer to capture both visual features and latent variables, and a sparse transition constraint is applied to enforce the slower dynamics of high-level action variables. This mechanism enhances the identification of these latent variables over time. Under mild assumptions, we prove that these latent action variables are strictly identifiable. Experimental results on several benchmarks show that the \textbf{HAL} model significantly outperforms existing methods for weakly-supervised action segmentation, confirming its practical effectiveness in real-world applications. \footnote{https://github.com/DMIRLAB-Group/HAL}
\end{abstract}    

\section{Introduction}
\label{sec:intro}

\begin{figure*}
    \centering
    \includegraphics[width=0.9\linewidth]{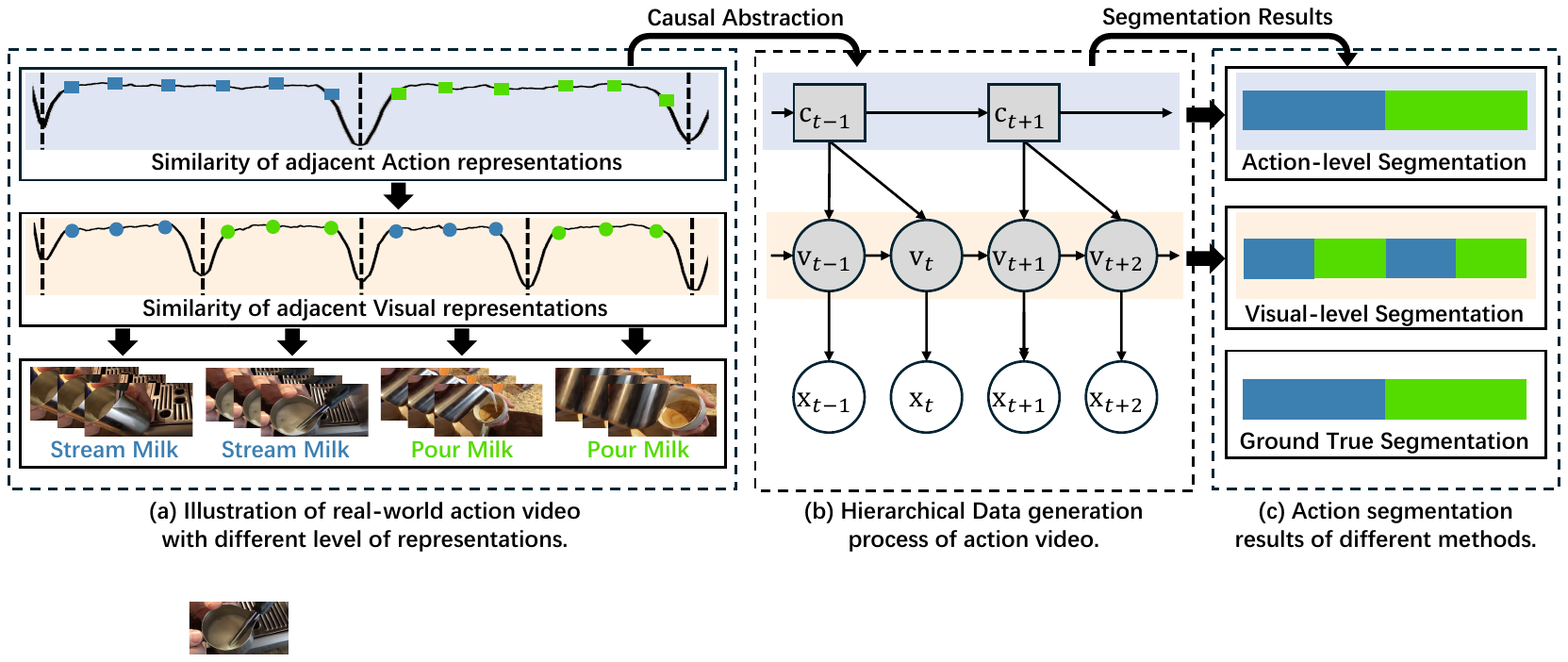}
    \caption{An action segmentation example in the CrossTask dataset. (a) Real-world action videos exhibit different levels of representation, where action representations change more smoothly than visual representations. (b) A hierarchical data generation process of action videos, where high-level latent action variables govern the evolution of low-level visual variables. (c) Action segmentation results show that action-level segmentation better aligns with the ground truth than visual-level segmentation.}
    \label{fig:motivation}
\end{figure*}

Weakly-supervised action segmentation \cite{ding2018weakly}, relying on coarse annotations such as action transcripts \cite{TASL} rather than detailed frame-level labels \cite{lu2024fact, ahn2021refining}, is one of the most fundamental tasks in video understanding \cite{li2024mvbench}, with applications in human activity recognition \cite{hoai2011joint} and video retrieval \cite{sun2021vsrnet}.

Several methods have been proposed for weakly-supervised action segmentation \cite{ding2023temporal, POC, rahaman2022generalized}, among which those based on action transcripts \cite{ding2023temporal} can be divided into iterative two-stage and single-stage approaches. The iterative methods start with initially estimated labels and refine them through repeated alignment and update steps. For example, ISBA \cite{ding2018weakly} first divides the video uniformly and then refines the segmentation using soft labels, while TASL \cite{TASL} progressively aligns training videos with transcripts. Single-stage approaches directly align transcripts with video frames. ATBA \cite{ATBA} identifies key action transitions for alignment, and 2by2 \cite{bueno20242by2} employs a Siamese network to determine whether two videos represent the same activity. Recently, the pose-aware model \cite{zhao2025pose} leverages pose-guided contrastive learning to improve action segmentation in instructional videos. 

Despite the remarkable progress, existing methods mainly rely on visual-level representations, where frequent fluctuations in appearance often lead to over-segmentation and noisy boundaries. As illustrated in Figure~\ref{fig:motivation}(a) with a CrossTask example, visual variations are easily mistaken for action transitions, resulting in false boundaries. In contrast, humans perceive activities through hierarchical action structures, where a few key transitions organize actions across multiple levels of abstraction. Motivated by this observation, we hypothesize that videos contain hierarchical latent variables that evolve at different temporal rates, low-level visual variables change rapidly, while high-level action variables evolve more slowly and capture stable semantic patterns (as shown in Figure~\ref{fig:motivation}(b) and (c)). These high-level action variables summarize the progression of an activity and naturally mitigate over-segmentation.

However, modeling such hierarchical latent structures is non-trivial. Without explicit constraints, the dynamic visual variables can become entangled with the higher-level action representations, producing unstable boundaries. Interestingly, the temporal asymmetry between fast-changing visual features and slowly evolving action variables provides a valuable inductive cue. Enforcing sparsity and smooth transitions on high-level action can therefore disentangle them from visual fluctuations and facilitate identifiability.

Based on this insight, we propose the Hierarchical Action Learning (\textbf{HAL}) model for weakly-supervised action segmentation. We formulate a hierarchical causal data generation process, where high-level latent action governs the dynamics of low-level visual variables. To model their asynchronous evolution, we introduce deterministic temporal alignment processes that maintain consistency between action and visual layers. Moreover, we design a pyramidal transformer architecture that captures multi-level dependencies, coupled with a sparse transition constraint to enforce the slower dynamics of high-level action. This inductive bias enables theoretical identifiability of the action variables over time. Extensive experiments on several benchmarks demonstrate that the proposed HAL model surpasses existing methods, validating its effectiveness in capturing structured, multi-level action dynamics in real-world videos.

\section{Related Works}\label{app:related}

\subsection{Weakly-Supervised Action Segmentation}

Weakly-supervised action segmentation methods \cite{bojanowski2014weakly,connectionistTemporal,kuehne2017weakly,RNN-HMM,richard2017weakly,CDFL,TASL,richard2018neuralnetwork,chang2019d3tw,DPDTW,AdaAct} aim to reduce reliance on frame-level annotations. Four main supervision forms are commonly studied: transcripts, action sets, timestamps, and textual descriptions.
Transcript-based approaches use ordered action lists without frame-level alignment, offering strong annotation efficiency. They include iterative two-stage methods, which generate and refine pseudo labels \cite{kuehne2017weakly,richard2017weakly,RNN-HMM,ding2018weakly,TASL}, and single-stage methods that learn alignment end-to-end \cite{connectionistTemporal,richard2018neuralnetwork,chang2019d3tw,CDFL,Mucon,DPDTW}. While effective, some suffer from high computational cost.
Action-set supervision provides only unordered action collections. Early work \cite{richard2018action} infers likely sequences via contextual and duration models. Later methods \cite{fayyaz2020sct,li2020set,li2021anchor,POC} improve pseudo-label generation and enable end-to-end training through constrained or differentiable formulations.
Timestamp-based methods use sparse frame annotations to infer full segmentations. They often refine pseudo labels iteratively \cite{li2021temporal,rahaman2022generalized,khan2022timestamp,souri2022robust}, employing EM frameworks or graph propagation to improve label consistency and handle missing timestamps.
Text-based supervision provides natural but noisy cues. Prior work aligns text and video \cite{malmaud2015s,bojanowski2015weakly,alayrac2016unsupervised,zhukov2019cross}, or learns joint multimodal embeddings \cite{radford2021learning,shvetsova2022everything}. Recent studies \cite{sener2015unsupervised,fried2020learning,han2022temporal} enhance alignment robustness via generative or structured models.

\subsection{Causal Representation Learning} 
Independent Component Analysis (ICA) is a core approach for uncovering identifiable causal representations \cite{rajendran2024learning,mansouri2023object,wendong2024causal}. Classical ICA assumes that observations arise from a linear mixture of independent latent variables \cite{comon1994independent,hyvarinen2013independent,lee1998independent,zhang2007kernel}, but this assumption rarely holds in real-world nonlinear settings. To guarantee identifiability in nonlinear ICA, additional conditions such as sparsity or auxiliary variables are introduced \cite{zheng2022identifiability,hyvarinen1999nonlinear,khemakhem2020ice,li2023identifying}.
Aapo et al. \cite{khemakhem2020variational} established the first theoretical foundation by showing that, under exponential-family latent distributions, identifiability can be achieved via auxiliary information (e.g., domain, time, or labels) \cite{hyvarinen2016unsupervised,hyvarinen2017nonlinear,hyvarinen2019nonlinear}. Later works \cite{kong2022partial,xie2023multi,kong2023identification,yan2024counterfactual} relaxed these assumptions, proving identifiability without restricting latent distributions.

In unsupervised scenarios, structural priors such as mechanism sparsity are used to isolate latent factors \cite{lachapelle2023synergies,lachapelle2022partial,zhang2024causal}. Extensions to time series \cite{hyvarinen2016unsupervised,halva2020hidden,lippe2022citris,huang2023latent,yan2024counterfactual} exploit temporal nonstationarity for identifiability, with permutation-based objectives handling stationary cases. More recent methods (e.g., TDRL \cite{yao2022temporally}, LEAP \cite{yao2021learning}, CtrlNS \cite{CtrlNS}) leverage independent noise or transition sparsity to recover latent dynamics; however, models like CtrlNS remain limited to a single-layer latent structure, constraining their ability to capture multi-level temporal factors. 

\section{Preliminaries}
\subsection{Data Generation Process}
We start with the original data generation process that is shown in Figure \ref{fig:motivation} (b). The observed variables $X=[\mathbf{x}_1, \cdots, \mathbf{x}_T] \in \mathbb{R}^{T \times d}$, which also represent a feature sequence of the video with $T$ frames. We define $V=[\mathbf{v}_{1}, \cdots ,\mathbf{v}_{T} ] \in \mathbb{R}^{T \times n_v}$ as the sequence of visual latent variables, and $c=[\mathbf{c}_{1}, \cdots ,\mathbf{c}_{T'} ] \in \mathbb{R}^{T' \times n_c}$ as the sequence of action latent variables, where $T'\leq T$ since the latent action variables change more slowly than visual variables. Suppose the observed variable $\mathbf{x}_t$ is obtained by applying an invertible mixing function $g$ to visual variables $\mathbf{v}_t$, without the presence of noise. Mathematically, we have:
\begin{equation}
\small
    \label{equ:x_gen}
    \mathbf{x}_t=g(\mathbf{v}_t)
\end{equation}
% For the visual-level and $i$-th dimension latent variable $v_{t,i}$ at timestamp $t$, it is generated through a latent causal process. This process is assumed to be related to the time-delayed parent $\text{Pa}_d(v_{t,i})$ and the hierarchical parent $\text{Pa}_h(v_{t,i})$, respectively. Formally, it can be expressed using a structural equation model as follows:
\textcolor{black}{For the $i$-th dimensional latent variable $v_{t,i}$ (visual level, timestamp $t$), it is generated via a latent causal process associated with $\text{Pa}_d(v_{t,i})$ (time-delayed parent, local temporal dependencies) and $\text{Pa}_h(v_{t,i})$ (hierarchical parent, stable high-level context). Formally, this process is formulated as:}
\begin{equation}
\small
    \label{equ:visual_gen}
    v_{t,i}=f^v_i\left(\text{Pa}_d(v_{t,i}), \text{Pa}_h(v_{t,i}), \epsilon_{t,i}^v\right),\quad\text{with}\quad \epsilon_{t,i}^v\sim p_{\epsilon_{t,v}^v},
\end{equation}
where $\epsilon_{t,i}^v$ is a temporally and spatially independent noise sampled from $p_{\epsilon_{t,i}^v}$. The action variables $\mathbf{c}_t$ is only related to the time-delayed parent $\text{Pa}_d(c_{t,i})$, which is formalized as:
\begin{equation}
\small
    \label{equ:concept_gen}
    c_{t,i}=f^c_i\left(\text{Pa}_d(c_{t,i}), \epsilon_{t,i}^c\right),\quad\text{with}\quad \epsilon_{t,i}^c\sim p_{\epsilon_{t,i}^c}.
\end{equation}

\subsection{Problem Statement}
Given a $T$-frame video, the goal is to predict a sequence of action $\hat{Y}=[\hat{y}_1,\cdots, \hat{y}_T] $, where each label $\hat{y}_t \in U $ and $ U =  \{1,2,\cdots,|U|\}$ denotes the set of action categories in the dataset, including a category for the background. In the weakly supervised action segmentation, the frame-level ground-truth label sequence $Y=[y_1,\cdots,y_T]$ is not accessible during training. Instead, the model is provided with an ordered sequence of action labels, referred to as a transcript, denoted by $A = [a_1,\cdots,a_M]$, where each $a_m \in U$, and  $M$ indicates the total number of action segments in the video, including those corresponding to background.

\begin{figure*}
    \centering
    \includegraphics[width=\linewidth]{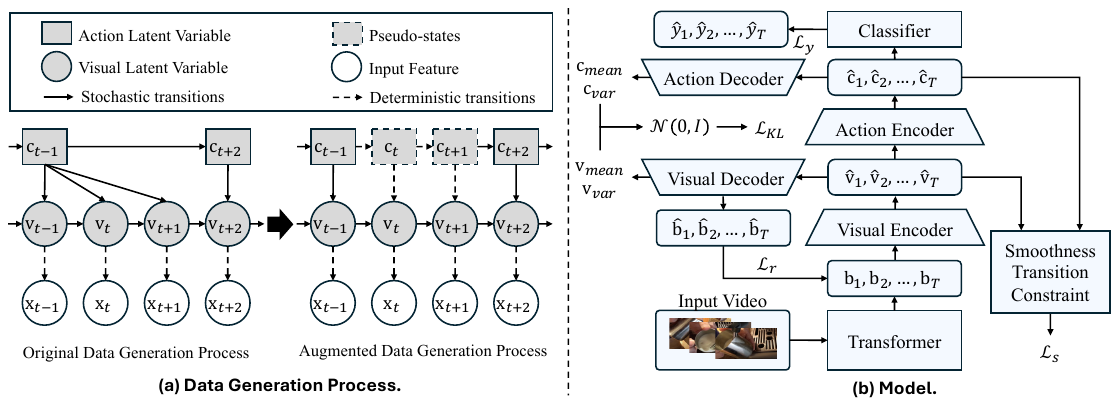}
    \caption{
    Illustration of the augmented data generation process and the framework of the \textbf{HAL} model. (a) The original data generation process is augmented by introducing pseudo-states and aligning the number of latent action with visual variables. The dashed arrows denote the unknown deterministic transitions. (b) The overall framework of \textbf{HAL} consists of a pyramidal transformer-based backbone for feature extraction, visual and action encoders for latent visual and action variables, visual and action decoder for reconstruction, and the smoothness transition constraint that enforces the identification of latent action variables.
    }
    \label{fig:Model}
\end{figure*}

\section{Approach}

Based on the aforementioned data generation process, we propose the Hierarchical Action Learning (\textbf{HAL}) model. Specifically, we first present the augmented data generation process (Figure \ref{fig:Model}(a)) for the tractability. We then detail the implementation of the HAL model in Figure \ref{fig:Model}(b).

\subsection{Augmented Data Generation Process}

As shown in Figure \ref{fig:motivation}(b), each high-level latent action variables corresponds to multiple latent visual variables. However, existing models typically encode a fixed number of latent variables, making it difficult to accommodate the original generative process directly. To bridge this gap, we propose an augmented data generation process, illustrated in Figure \ref{fig:Model}(a). This augmentation consists of two steps. First, since the number of latent action variables is smaller than that of latent visual variables, we introduce pseudo-states (the dashed boxes) to align their quantities. 
Second, to encode the prior knowledge that latent action variables evolve more slowly than latent visual variables, we model the transitions between pseudo-states as deterministic but unknown processes (dashed arrows). In this augmented causal graph, the number of latent action variables is consistent with the number of latent visual variables (i.e., video length), allowing us to leverage existing backbones for multi-level feature extraction, while preserving the prior that action transitions are smoother than those of latent visual variables. 
The aforementioned augmentation equivalently transforms a stochastic process $\fbox{$\mathbf{c}_{t-1}$}\rightarrow \fbox{$\mathbf{c}_{t+2}$}$ into a mixing process $\fbox{$\mathbf{c}_{t-1}$}\dashrightarrow \dashfbox{\mathbf{c}_{t}}\dashrightarrow \dashfbox{\mathbf{c}_{t+1}} \rightarrow \fbox{$\mathbf{c}_{t+2}$}$, where $\dashfbox{\mathbf{c}_{t}}$ and $\dashfbox{\mathbf{c}_{t+1}}$ denote the pseudo-states; $\dashrightarrow$ and $\rightarrow$ denotes the deterministic and stochastic transitions, respectively. 

Compared with stochastic transition, where extra information is brought via exogenous noise, since the transition of the pseudo-states is deterministic, no extra information is brought, so $\fbox{$\mathbf{c}_{t-1}$}$, $\dashfbox{\mathbf{c}_{t}}$, and $\dashfbox{\mathbf{c}_{t+1}}$ can be denoted as the same latent action, preserving the prior that the latent action evolve more slowly than the latent visual variables. As a result, the lengths of the latent action and visual variables are aligned, which facilitates subsequent modeling.

% \subsection{Latent Concept Learning Model}
\subsection{Hierarchical Action Learning Model}
In this section, we devise the Hierarchical Action Learning model, which is built on the pyramid transformer-based architecture with a smoothness transition constraint to enforce different changing speeds of hierarchical latent variables. %Please refer to supplementary material for more implementation of the proposed \textbf{HAL} model.
\paragraph{Video Modeling with variational inference.} 
Similar to the methods that model video or image data with the variational auto-encoder at the feature level \cite{chenlearning}, we first leverage a visual transformer-based backbone network \cite{ATBA} to extract the low-dimensional feature $\mathbf{b}_{1:T}$. Then we can take $\mathbf{x}_{1:T}$ as $\mathbf{b}_{1:T}$ and model the augmented data generation process with variational inference. We  begin by deriving the evidence lower bound (ELBO), as shown below:
\begin{equation}
\small
\begin{split}
\label{equ:elbo_full}
&ELBO=\underbrace{\mathbb{E}_{q(\mathbf{v}_{1:T}|\mathbf{b}_{1:T})}\log p(\mathbf{b}_{1:T}|\mathbf{v}_{1:T})}_{\mathcal{L}_{r}}-\\ &\underbrace{\Big( D_{KL}\big(q(\mathbf{v}_{1:T}|\mathbf{b}_{1:T})||p(\mathbf{v}_{1:T})\big) + D_{KL}\big(q(\mathbf{c}_{1:T}|\mathbf{v}_{1:T})||p(\mathbf{c}_{1:T})\big)\Big)}_{\mathcal{L}_{KL}}
\end{split},
% \vspace{-2mm}
\end{equation}
where $\mathcal{L}_r$ denotes the reconstruction loss, and $\mathcal{L}_{KL}$ denotes the Kullback–Leibler divergence. Moreover, the distributions $q(\mathbf{v}_{1:T}|\mathbf{b}_{1:T})$ and $q(\mathbf{c}_{1:T}|\mathbf{v}_{1:T})$ approximate the priors of the latent variables and correspond to the visual encoder and action encoder in Figure~\ref{fig:Model}(b), respectively. Both are implemented using Transformer architectures. And $p(\mathbf{b}_{1:T}|\mathbf{v}_{1:T})$ is used to reconstruct the feature $\mathbf{b}_{1:T}$. Mathematically, the aforementioned backbone network, visual encoder, action encoder and visual decoder, action decoder are formulated by $\phi$, $\psi$, $\eta$, $\kappa$ and $\xi$, respectively as follows:
\begin{equation}
\small
\begin{split}
    \mathbf{b}_{1:T} = \phi(\mathbf{x}_{1:T})&, \quad 
    \hat{\mathbf{v}}_{1:T} = \psi(\mathbf{b}_{1:T}), \quad 
    \hat{\mathbf{c}}_{1:T} = \eta(\hat{\mathbf{v}}_{1:T}), \\
    \hat{\mathbf{b}}_{1:T} &= \kappa(\hat{\mathbf{v}}_{1:T}), \quad 
    \hat{\mathbf{v}}_{1:T}' = \xi(\hat{\mathbf{c}}_{1:T}),
\end{split}
\end{equation}
where $\hat{\mathbf{b}}_{1:T}$ denotes the reconstructed features.

\paragraph{Smoothness Transition Constraint.} To incorporate the inductive bias that latent action variables evolve more slowly than the latent visual variables, we further propose the smoothness transition constraint. First, since the latent action and visual variables may operate on different scales, we need to apply normalization to align them onto a consistent scale. Specifically, we employ the L2 normalization on the latent action variables and the latent visual variables, which are formulated as follows: 
\begin{equation}
\small
\label{equ:L2norm}
    \overline{\mathbf{v}}_{1:T} = L2(\mathbf{v}_{1:T}),\quad \overline{\mathbf{c}}_{1:T} = L2(\mathbf{c}_{1:T}),
\end{equation}
where $\overline{\mathbf{v}}_{1:T}$ and $\overline{\mathbf{c}}_{1:T}$ denote the normalized latent visual and action variables, respectively. Sequentially, we quantify the changes of latent visual and action variables to further constraint the prior that the latent action change more slowly than latent visual variables, as shown in Equation (\ref{equ:diff})
\begin{equation}
\small
\label{equ:diff}
\begin{split}
    \Delta \overline{V} &= \{|\overline{\mathbf{v}}_2-\overline{\mathbf{v}}_1|,\cdots, |\overline{\mathbf{v}}_{T}-\overline{\mathbf{v}}_{T-1}|\} ,\\
    \Delta \overline{C} &= \{|\overline{\mathbf{c}}_2-\overline{\mathbf{c}}_1|,\cdots, |\overline{\mathbf{c}}_{T}-\overline{\mathbf{c}}_{T-1}|\} ,
\end{split}
\end{equation}
where $\Delta \overline{V}$ and $\Delta \overline{C}$ denote the changing degrees of latent visual and action variables, respectively.

As a results, we can constrain the changing of different layers of latent variables by
\begin{equation}
\small
\label{equ:smooth}
\begin{split}
    \mathcal{L}_s &= \underbrace{\textbf{ReLU}(\sum_{t=1}^{T-1} \mathbf{w}_c \Delta \overline{C} - \sum_{i=1}^{T-1} \mathbf{w}_v \Delta \overline{V})}_{\textbf{(i)}} + \underbrace{\delta \sum_{t=1}^{T-1} \mathbf{w}_c \Delta \overline{C}}_{\textbf{(ii)}},
\end{split}
\end{equation}
where $\delta$ is a hyper-parameter; $\mathbf{w}_c$ and $\mathbf{w}_v$ are the weights that make the changing parts more significant, which are calculated by a $SoftMAX$ activation as follows:
\begin{equation}
\small
\label{equ:s_loss}
      \mathbf{w}_c = \text{SoftMAX}(\Delta \overline{C}), \quad
    \mathbf{w}_v = \text{SoftMAX}(\Delta \overline{V}).  
\end{equation}
\textbf{Intuition:} Equation (\ref{equ:smooth}) implies that \textbf{1)} When the estimated latent action variables change faster than the latent visual variables, i.e., $\sum_{t=1}^{T-1} \mathbf{w}_c \Delta \overline{C} - \sum_{i=1}^{T-1} \mathbf{w}_v \Delta \overline{V} > 0$, the term (i) in Equation  (\ref{equ:smooth}) imposes a constraint to ensure that the latent action variables evolve more slowly than the latent visual variables; \textbf{2)} In the mean while, term (ii) further enforces the temporal smoothness of the latent action variables by penalizing rapid changes, encouraging consistency over time. Please refer to the Ablation Study subsection for the comprehensive effectiveness of each component of the proposed smoothness transition constraint.

\paragraph{Model Summary} By combining the aforementioned ELBO, the smoothness transition constraint with the classifier loss for segmentation, we can finally formalize the total loss of the proposed \textbf{HAL} model as follows:
\begin{equation}
\small
\begin{split}
\label{equ:total_loss}
    \mathcal{L}_{total} &= \mathcal{L}_{y} - \alpha \cdot ELBO + \beta \cdot \mathcal{L}_{s},
\end{split}
\end{equation}
where we follow the total loss as $\mathcal{L}_{y}$ from ATBA \cite{ATBA}, $\alpha$ and $\beta$ are hyper-parameters.

\subsection{Comparison with Existing Methods}
Although existing methods for action segmentation also leverage techniques like smoothness or hierarchical modeling, we would like to highlight that the proposed method is different from the existing methods in the following ways: 1) Compared with the existing smoothness-based methods like Gaussian Smoothing \cite{du2022fast} and Boundary Smoothing \cite{ATBA}, our method constrains the smoothness on the latent variables, i.e., the latent action variables instead of the predicted label. 2) Compared with the methods like \cite{sarfraz2021temporally,mounir2023streamer} that also model videos in a hierarchical manner, our method can show that the high-level latent action variables can be identified with theoretical guarantees.  

\section{Theoretical Analysis}

To show the theoretical guarantees, we begin with the definition of block-wise identification and linear operators.

\begin{definition}

[\textbf{Block-wise Identifiability of Latent Action $\mathbf{c}_t$ and Visual Variables $\mathbf{v}_t$} \cite{von2021self}] 
\label{eq:block_iden}
The block-wise identifiability of $(\mathbf{v}_t, \mathbf{c}_t) \in \mathbb{R}^{n_v+n_c}$ means that for ground-truth $(\mathbf{v}_t, \mathbf{c}_t)$, there exists $(\hat{\mathbf{v}}_t,\hat{\mathbf{c}}_t) \in \mathbb{R}^{n_v+n_c}$ and an invertible function $h_{\mathbf{v}, \mathbf{c}}:\mathbb{R}^{n_v+n_c}\rightarrow\mathbb{R}^{n_v+n_c}$, such that $(\mathbf{v}_t,\mathbf{c}_t)=h(\hat{\mathbf{v}}_t,\hat{\mathbf{c}}_t)$.
\end{definition}
Although the above definition is stated for $(\mathbf{v}_t, \mathbf{c}_t)$, it naturally extends to the case of latent action variables $\mathbf{c}_t$. Specifically, when we say that the latent action variables $\mathbf{c}_t \in \mathbb{R}^{n_c}$ is block-wise identifiable (as shown in Theorem \ref{theorem:2}), we mean that there exists an estimated $\hat{\mathbf{c}}_t \in \mathbb{R}^{n_c}$ and an invertible function $h_c:\mathbb{R}^{n_c}\rightarrow \mathbb{R}^{n_c}$ such that $\mathbf{c}_t=h_c(\hat{\mathbf{c}}_t)$.

\begin{definition} [\textbf{Linear Operator} \cite{hu2008instrumental,dunford1988linear}] \label{Defn:linear}
Consider two random variables $\mathbf{a}$ and $\mathbf{b}$ with support $\mathcal{A}$ and $\mathcal{B}$, the linear operator $L_{\mathbf{b}|\mathbf{a}}$ is defined as a mapping from a probability function $p_{\mathbf{a}}$ in some function space $\mathcal{F}(\mathcal{A})$ onto the probability function $p_{\mathbf{b}}=L_{\mathbf{b}|\mathbf{a}}\circ p_{\mathbf{a}}$ in some function space $\mathcal{F}(\mathcal{B})$,
\begin{equation}
% \vspace{-2mm}
\small
    \mathcal{F}(\mathcal{A})\rightarrow \mathcal{F}(\mathcal{B}): p_{\mathbf{b}}=L_{\mathbf{b}|\mathbf{a}}\circ p_{\mathbf{a}}=\int_{\mathcal{A}} p_{\mathbf{b}|\mathbf{a}}(\cdot|\mathbf{a})p_{\mathbf{a}}(\mathbf{a})d\mathbf{a} .
\end{equation}
\vspace{-5mm}
\end{definition}
Intuitively, the definition of the linear operator describes a transformation from one probability function $p_{a}(a)$ to another probability function $p_{b}(b)$. Please refer to supplementary material for more discussion, explanation, and examples of the linear operator. Based on the aforementioned definitions, we first show that the hierarchical latent variables $(\mathbf{v}_t, \mathbf{c}_t)$ are block-wise identifiable in Lemma \ref{theorem:1}.

\subsection{Identifiability of Action and Visual Variables}
\begin{lemma}
% \label{the:hu}
\label{theorem:1}
\begin{figure*}
    \centering
    \includegraphics[width=0.95\linewidth]{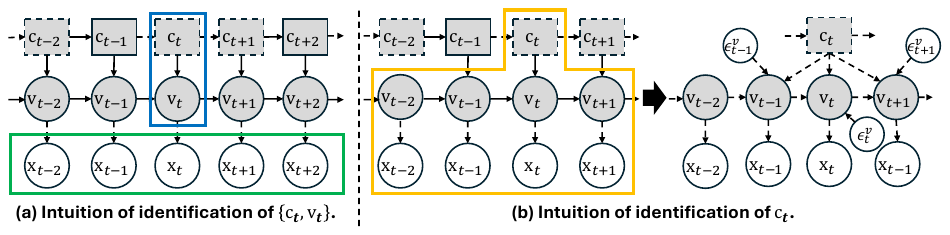}
    \caption{Intuitions of the theoretical results, where the solid and dashed arrows denote the stochastic and deterministic transitions, respectively. (a) The identification of $(\mathbf{v}_t, \mathbf{c}_t)$ can be achieved by leveraging five consecutive temporal observations. (b) By introducing the independent noises $\epsilon^v_{t-1}, \epsilon^v_{t}, \epsilon^v_{t+1}$, the stochastic transition processes $\mathbf{v}_{t-2} \rightarrow \mathbf{v}_{t-1}$ and $\mathbf{v}_{t} \rightarrow \mathbf{v}_{t+1}$ can be effectively transformed into corresponding deterministic transition processes. }
    
    % \vspace{-2mm}
    \label{fig:intuition}
\end{figure*}

(\textbf{Block-wise Identification of $(\mathbf{v}_t, \mathbf{c}_t)$.}) Suppose the observed, latent visual, and latent action variables follow the augmented data generation process in Figure \ref{fig:Model}(a). By matching the true joint distribution of 5 numbers of adjacent video frames, i.e., $\{\mathbf{x}_{t-2},\mathbf{x}_{t-1},\mathbf{x}_t,\mathbf{x}_{t+1},\mathbf{x}_{t+2}\}$, we further make the following assumptions:
\begin{itemize}[leftmargin=*,  itemsep=5pt]  
    \item A1 \label{A1} \underline{(\textbf{Bounded and Continuous Density}):} The joint distribution of $\mathbf{x}_t$,$\mathbf{v}_t$,$\mathbf{c}_t$, and their marginal and conditional densities are bounded and continuous.
    \item A2 \label{A2} \underline{(\textbf{Injective Linear Operators}):} The linear operators $L_{\mathbf{x}_{t+1} | \mathbf{v}_{t}, \mathbf{c}_{t}}$ and $L_{\mathbf{x}_{t-1} | \mathbf{x}_{t+1}}$ are injective for bounded function space.
    \item A3 \label{A3} \underline{ (\textbf{Positive Density}):}For all $[\mathbf{v}_{t}, \mathbf{c}_{t}],[\mathbf{v}_{t}, \mathbf{c}_{t}]' \in \mathcal{V}_{t} \cup \mathcal{C}_{t}$ with $[\mathbf{v}_{t}, \mathbf{c}_{t}] \neq [\mathbf{v}_{t}, \mathbf{c}_{t}]'$ the set $\{ \mathbf{x}_{t}:p(\mathbf{x}_{t}|\mathbf{v}_{t}, \mathbf{c}_{t}) \neq p(\mathbf{x}_{t}| \mathbf{v}_{t}', \mathbf{c}_{t}') \}$ has positive probability.
\end{itemize}
Suppose that the learned $(\hat{g}, \hat{f}, p_{\hat{\epsilon}})$ to achieve Equation (\ref{equ:x_gen}) - (\ref{equ:concept_gen}), then the latent visual and action variables $(\mathbf{v}_t, \mathbf{c}_t)$ are block-wise identifiable.
\end{lemma}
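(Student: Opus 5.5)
We will follow the spectral-decomposition strategy of Hu and Schennach \cite{hu2008instrumental}, treating the pair $\mathbf{z}_t:=(\mathbf{v}_t,\mathbf{c}_t)$ as a single latent "treatment" variable. The observed $\mathbf{x}_{t-1}$ (together with $\mathbf{x}_{t-2}$) serves as an instrument, and $\mathbf{x}_{t+1}$ (together with $\mathbf{x}_{t+2}$) serves as a second measurement.

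\textbf{Step 1: conditional independence.} From the augmented generation process in Figure~\ref{fig:Model}(a), we first show that, conditional on $\mathbf{z}_t$, the three blocks $\{\mathbf{x}_{t-2},\mathbf{x}_{t-1}\}$, $\mathbf{x}_t$ and $\{\mathbf{x}_{t+1},\mathbf{x}_{t+2}\}$ are mutually independent. This uses three facts: $\mathbf{x}_t=g(\mathbf{v}_t)$ is noiseless; the transitions of $\mathbf{c}$ through pseudo-states are deterministic; and the noises $\epsilon^v$ and $\epsilon^c$ are independent across time. The role of the five frames is as follows. Because the dynamics may be higher-order, or pass through deterministic pseudo-states (Figure~\ref{fig:intuition}(b)), a single neighbouring frame may not d-separate past from future. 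Using two frames on each side absorbs the time-delayed parents, so that $\mathbf{z}_t$ is a sufficient state.

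\textbf{Step 2: operator factorization.} Fix a value $\mathbf{x}_t$. Using the conditional independence from Step 1, we write the observable joint density as an integral over $\mathbf{z}_t$. In operator form this gives
\[
L_{\mathbf{x}_{t+1},\mathbf{x}_t,\mathbf{x}_{t-1}} = L_{\mathbf{x}_{t+1}|\mathbf{z}_t}\, D_{\mathbf{x}_t|\mathbf{z}_t}\, L_{\mathbf{z}_t,\mathbf{x}_{t-1}},
\]
where $D_{\mathbf{x}_t|\mathbf{z}_t}$ is the diagonal (multiplication) operator by $p(\mathbf{x}_t|\mathbf{z}_t)$. Integrating out $\mathbf{x}_t$ gives the companion identity $L_{\mathbf{x}_{t+1},\mathbf{x}_{t-1}}=L_{\mathbf{x}_{t+1}|\mathbf{z}_t}L_{\mathbf{z}_t,\mathbf{x}_{t-1}}$. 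Assumption A2 makes $L_{\mathbf{x}_{t+1}|\mathbf{z}_t}$ and $L_{\mathbf{x}_{t-1}|\mathbf{x}_{t+1}}$ injective, so right-inverses exist on the bounded function space. Eliminating $L_{\mathbf{z}_t,\mathbf{x}_{t-1}}$ then yields the eigen-decomposition
\[
L_{\mathbf{x}_{t+1},\mathbf{x}_t,\mathbf{x}_{t-1}}L_{\mathbf{x}_{t+1},\mathbf{x}_{t-1}}^{-1}=L_{\mathbf{x}_{t+1}|\mathbf{z}_t}D_{\mathbf{x}_t|\mathbf{z}_t}L_{\mathbf{x}_{t+1}|\mathbf{z}_t}^{-1}.
\]
The left-hand side is determined by the observed distribution.

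\textbf{Step 3: uniqueness of the decomposition.} The eigenfunctions $p(\mathbf{x}_{t+1}|\mathbf{z}_t)$ and eigenvalues $p(\mathbf{x}_t|\mathbf{z}_t)$ are unique up to three ambiguities, each handled separately.
\begin{itemize}
\item \emph{Scaling} is fixed because the eigenfunctions are densities, which integrate to one.
\item \emph{Eigenvalue degeneracy} is ruled out by A3: distinct $\mathbf{z}_t$ give eigenvalue functions that differ on a set of positive probability. A1 (boundedness and continuity) supplies the spectral theorem needed here.
\item \emph{Indexing (relabeling) of $\mathbf{z}_t$} cannot be removed, and it is exactly the allowed invertible $h$.
\end{itemize}

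\textbf{Step 4: conclusion.} The learned model $(\hat g,\hat f,p_{\hat\epsilon})$ reproduces the same joint distribution of the five frames and admits the same factorization with $\hat{\mathbf{z}}_t$. Uniqueness from Step 3 therefore gives $p(\mathbf{x}_{t+1}|\mathbf{z}_t)=p(\mathbf{x}_{t+1}|\hat{\mathbf{z}}_t)$ under some bijective relabeling $\mathbf{z}_t=h(\hat{\mathbf{z}}_t)$, which is the block-wise identifiability of Definition~\ref{eq:block_iden}. Invertibility of $h$ follows because the spectrum is determined in both directions.

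\textbf{Main obstacle.} We expect Step 3, pinning the relabeling down, to be the hardest part. Hu--Schennach normally require an extra functional $M$ with $M[p(\cdot|\mathbf{z})]=\mathbf{z}$ to fix the labels; without it we only obtain an invertible relabeling, which is nonetheless exactly what block identifiability demands. The second delicate point is verifying the d-separation in Step 1 under the deterministic pseudo-state transitions. Our plan there is to substitute the independent noises as in Figure~\ref{fig:intuition}(b), rewriting the stochastic steps as deterministic functions of independent noises.
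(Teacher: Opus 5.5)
Your Steps 2--4 follow the paper's argument. Both eliminate the operator linking $\mathbf{z}_t$ to the past frames, invoke uniqueness of the spectral decomposition (Dunford--Schwartz), fix the scale by normalization, use A3 to separate eigenvalues, and end with a relabeling $h$ satisfying $\mathbf{z}_t=h(\hat{\mathbf{z}}_t)$. The paper also does without the functional $M$, and your use of joint rather than conditional operators makes no difference. What you have wrong is the role of the five frames. As written, your Step 2 uses one frame per side, which contradicts your own Step 1 claim that two frames per side are needed for d-separation.

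Wider blocks cannot help d-separation. The block independence $\{\mathbf{x}_{t-2},\mathbf{x}_{t-1}\}\perp\{\mathbf{x}_{t+1},\mathbf{x}_{t+2}\}\mid\mathbf{z}_t$ implies $\mathbf{x}_{t-1}\perp\mathbf{x}_{t+1}\mid\mathbf{z}_t$ by marginalization. Conversely, if the latent dynamics had order higher than one, conditioning on $\mathbf{z}_t$ would fail to separate past from future however many frames the blocks contained. The conditional independence is a property of $\mathbf{z}_t$ being a Markov state, not of the number of frames.

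In the paper the extra frames are there for \emph{injectivity}. Since $\mathbf{x}_{t+1}=g(\mathbf{v}_{t+1})$ carries only $n_v$ degrees of freedom, the paper's closing dimension count says a single frame cannot serve as an injective measurement of the $(n_v+n_c)$-dimensional $\mathbf{z}_t$. The pair $\mathbf{x}_{>t}=\{\mathbf{x}_{t+1},\mathbf{x}_{t+2}\}$, and symmetrically $\mathbf{x}_{<t}$, can. The paper therefore runs your Step 2 with the blocks and arrives at
\[ L_{\mathbf{x}_{>t},\mathbf{x}_t|\mathbf{x}_{<t}}L^{-1}_{\mathbf{x}_{>t}|\mathbf{x}_{<t}}=L_{\mathbf{x}_{>t}|\mathbf{z}_t}D_{\mathbf{x}_t|\mathbf{z}_t}L^{-1}_{\mathbf{x}_{>t}|\mathbf{z}_t}. \]

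Your three-frame version is formally valid if A2 is read literally. However, it rests on exactly the single-frame injectivity the paper argues is unavailable in this model, and it leaves $\mathbf{x}_{t\pm 2}$ unused. The fix is to substitute $\mathbf{x}_{>t},\mathbf{x}_{<t}$ for $\mathbf{x}_{t+1},\mathbf{x}_{t-1}$ and read A2 as a statement about the block operators; none of the operator algebra changes.

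A small wording correction as well: injectivity of $L_{\mathbf{x}_{>t}|\mathbf{z}_t}$ gives a \emph{left} inverse. The inverse you apply on the right comes from Hu--Schennach's Lemma~1, applied to the injective $L_{\mathbf{x}_{<t}|\mathbf{x}_{>t}}$.
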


\paragraph{Intuition and Proof Sketch:} 

Please refer to the detailed proof in supplementary material. This theoretical extension builds upon the work of \cite{hu2008instrumental}, where the latent variables are assumed to be single-layered. Under this assumption, three consecutive observations are sufficient to characterize the dynamics of the latent variable $(\mathbf{v}_t, \mathbf{c}_t)$. In contrast, our model involves a two-layer latent structure, where additional observations, i.e., $\mathbf{x}_{t-2}$ and $\mathbf{x}_{t+2}$, are required to identify $(\mathbf{v}_t, \mathbf{c}_t)$. Intuitively, these five observations effectively capture the underlying temporal evolution of latent variables, as shown in Figure~\ref{fig:intuition}(a).

The proof can be separated into two steps. First, we can construct an eigenvalue-eigenfunction decomposition regarding the integral operator by leveraging the definition of linear operators. Sequentially, using the uniqueness of spectral decomposition (Theorem XV.4.3.5 \cite{dunford1988linear}), latent variables $(\mathbf{v}_t, \mathbf{c}_t)$ are block-wise identifiable when the marginal distribution of observations is matched.

\subsection{Identifiability of Latent Action Variables}
Based on the identifiability of $(\mathbf{v}_t, \mathbf{c}_t)$, we further leverage the hierarchical property of latent variables to show that the latent action $\mathbf{c}_t$ are block-wise identifiable, i.e., the estimated $\hat{\mathbf{c}}_t$ only contains the information of true $\mathbf{c}_t$.

\begin{theorem}
\label{theorem:2}
(\textbf{Block-wise Identifiability of Latent Action Variables.}) Suppose assumptions for Lemma \ref{theorem:1} hold and for each action-level block, it has at least two latent visual variables as descendants in data generation process in Figure \ref{fig:intuition}(b), we further make the following assumptions:
\begin{itemize}[leftmargin=*,  itemsep=5pt]  
    \item A4 \label{A4} \underline{(\textbf{non-singular Jacobian}):} Each $f_m$ has non-singular Jacobian matrices almost anywhere and $f_m$ is invertible.
    \end{itemize}
Then by matching the marginal distribution of observed variables and modeling the hierarchical relationships, the latent action variables can be block-wise identifiable.
\end{theorem}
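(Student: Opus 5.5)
We start from Lemma \ref{theorem:1}: the joint block $(\mathbf{v}_t,\mathbf{c}_t)$ is recovered up to an invertible map $h$, so $(\hat{\mathbf{v}}_t,\hat{\mathbf{c}}_t)=h^{-1}(\mathbf{v}_t,\mathbf{c}_t)$. It remains to show that the $\hat{\mathbf{c}}_t$-component of $h^{-1}$ does not depend on $\mathbf{v}_t$. Equivalently, writing $\hat{\mathbf{c}}_t=h_c(\mathbf{v}_t,\mathbf{c}_t)$, we must show $\partial h_c/\partial \mathbf{v}_t=0$; invertibility of the restricted map $\mathbf{c}_t\mapsto\hat{\mathbf{c}}_t$ then follows from the invertibility of $h$.

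We follow the standard block-identifiability argument of \cite{von2021self} and exploit the hierarchical structure of Figure \ref{fig:intuition}(b).

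\textbf{Step 1: reduce to a deterministic map.} By assumption, each action block has at least two visual descendants, say $\mathbf{v}_{t}$ and $\mathbf{v}_{t+1}$. These share the same action value $\mathbf{c}_t$, because the pseudo-state transitions are deterministic and carry no new information. Absorb the independent noises $\epsilon^v_{t},\epsilon^v_{t+1}$ into the transitions, as in the figure. Each visual variable can then be written as $\mathbf{v}_{\tau}=f_m(\mathbf{c}_t,\mathbf{v}_{\tau-1},\epsilon^v_{\tau})$. By A4, each $f_m$ is invertible with a Jacobian that is nonsingular almost everywhere.

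\textbf{Step 2: show $\mathbf{c}_t$ is the shared information.} The action variable $\mathbf{c}_t$ is exactly the component common to the descendants $\mathbf{v}_t$ and $\mathbf{v}_{t+1}$. The visual-specific parts, namely the noises and the time-delayed visual parents, are independent across these descendants given $\mathbf{c}_t$. This gives the setting of \cite{von2021self}, viewing the two descendants as two "views" generated from a shared content $\mathbf{c}_t$ and private styles. The estimated model satisfies the same structure, so $\hat{\mathbf{c}}_t$ must be a function shared by the estimated descendants $\hat{\mathbf{v}}_t$ and $\hat{\mathbf{v}}_{t+1}$. By Lemma \ref{theorem:1}, these estimates are invertible functions of the true $(\mathbf{v}_\tau,\mathbf{c}_\tau)$.

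\textbf{Step 3: the main obstacle.} We must rule out that $\hat{\mathbf{c}}_t$ absorbs part of the private visual information. Suppose $\partial h_c/\partial\mathbf{v}_t\neq 0$ on some open set; continuity from A1 guarantees such a set exists whenever the derivative is nonzero somewhere. Then varying $\epsilon^v_t$ while keeping $\mathbf{c}_t$ fixed would change $\hat{\mathbf{c}}_t$. The same $\hat{\mathbf{c}}_t$, however, must equal its deterministic copy associated with $\mathbf{v}_{t+1}$, whose private noise $\epsilon^v_{t+1}$ is independent of $\epsilon^v_t$. Matching the marginal distribution of the observations then forces a functional equality that can hold only if the dependence on the private noise vanishes. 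Invertibility of $f_m$ (A4) is what lets us transport the noise dependence back to $\mathbf{v}_t$.

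The proof is completed with a positivity argument in the spirit of \cite{von2021self}. If the dependence were nonzero, there would be a set of positive probability on which the two expressions for $\hat{\mathbf{c}}_t$ disagree. This contradicts the distribution matching, which uses positive density as in A3. Hence $\hat{\mathbf{c}}_t=h_c(\mathbf{c}_t)$ with $h_c$ invertible, which is the claimed block-wise identifiability.

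I expect this step to be the delicate one. The difficulty is making rigorous that the deterministic pseudo-state copies impose an equality constraint on the estimated model, rather than merely an equality in distribution.
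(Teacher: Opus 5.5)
\textbf{There is a genuine gap, and it starts in Step~2.} Your two views $\mathbf{v}_t,\mathbf{v}_{t+1}$ do not share only $\mathbf{c}_t$. The variable $\mathbf{v}_t$ is itself the time-delayed parent of $\mathbf{v}_{t+1}=f(\mathbf{c}_t,\mathbf{v}_t,\epsilon^v_{t+1})$. So the ``private'' input of the second view contains the first view, and your claim that the private parts are independent given $\mathbf{c}_t$ is false.

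As a result, the contradiction in Step~3 never appears. Varying $\epsilon^v_t$ with $\mathbf{c}_t$ fixed moves $\mathbf{v}_t$, and therefore moves $\mathbf{v}_{t+1}$. Whatever is read off from $\mathbf{v}_{t+1}$ can then change together with $\hat{\mathbf{c}}_t$; the independence of $\epsilon^v_{t+1}$ from $\epsilon^v_t$ does not help.

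Concretely, take any $\hat{\mathbf{c}}_t=B(\mathbf{c}_t,\mathbf{v}_t)$. It is a function of $(\mathbf{v}_t,\mathbf{c}_t)$, as Lemma~\ref{theorem:1} allows. By the invertibility in A4 it is also a function of each of your two views. So it satisfies every constraint you impose. The information shared by this pair is the whole block $(\mathbf{c}_t,\mathbf{v}_t)$, which Lemma~\ref{theorem:1} already gives you.

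\textbf{Further problems with the closing steps.}
\begin{itemize}
\item The positivity step has no hypothesis behind it. A3 says that distinct latent values give distinct $p(\mathbf{x}_t\mid\mathbf{v}_t,\mathbf{c}_t)$. It does not say the ``style'' changes with full support, as \cite{von2021self} requires.
\item A1 is about densities, not about the smoothness of $h$, so it does not give you the open set you invoke.
\item The point you flag as delicate, a pointwise rather than distributional constraint, is left open.
\end{itemize}

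A repair along your lines would vary $\epsilon^v_{t+1}$ with $(\mathbf{c}_t,\mathbf{v}_t)$ fixed, and would force $\hat{\mathbf{c}}_t$ and its pseudo-state copy to be produced by the \emph{same} map. That needs three things the theorem does not assume: time-homogeneity of the Lemma~\ref{theorem:1} map, alignment of the estimated pseudo-states with the true ones, and a richness condition making $\epsilon^v_{t+1}\mapsto f(\mathbf{c}_t,\mathbf{v}_t,\epsilon^v_{t+1})$ sweep an open set.

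\textbf{How the paper avoids this.} It uses the two non-adjacent descendants $\mathbf{v}_{t-1}=f_{t-1}(\mathbf{c}_t,\mathbf{v}_{t-2},\epsilon^v_{t-1})$ and $\mathbf{v}_{t+1}=f_{t+1}(\mathbf{c}_t,\mathbf{v}_t,\epsilon^v_{t+1})$. Their private inputs $(\mathbf{v}_{t-2},\epsilon^v_{t-1})$ and $(\mathbf{v}_t,\epsilon^v_{t+1})$ are disjoint, and $\mathbf{v}_t$ is pinned by $\mathbf{x}_t$ through the invertible $g$.

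It then turns distribution matching into a functional equation directly. With $[\mathbf{v}_{t-1},\mathbf{v}_{t+1}]=f(\mathbf{c}_t,\mathbf{v}_{t-2},\mathbf{v}_t,\epsilon^v_{t-1},\epsilon^v_{t+1})$ and $h=\hat f^{-1}\circ f$, it writes $f_{t-1}(\mathbf{c}_t,\mathbf{v}_{t-2},\epsilon^v_{t-1})=\hat f_{t-1}\big(h_{(\mathbf{c}_t,\mathbf{v}_{t-2},\epsilon^v_{t-1})}(\cdot)\big)$, and analogously for $t+1$.

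Differentiating the first identity in $\epsilon^v_{t+1}$ and $\mathbf{v}_t$, and the second in $\epsilon^v_{t-1}$ and $\mathbf{v}_{t-2}$, makes the left-hand sides vanish. Invertibility of $\hat f_{t\pm1}$ (A4) then forces the corresponding partial derivatives of $h$ to vanish. Since $\hat{\mathbf{c}}_t$ lies in both estimated blocks, it depends on $\mathbf{c}_t$ alone. No positivity or contradiction argument is needed.

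The ideas missing from your plan are this choice of two views that share only $\mathbf{c}_t$, and the $\hat f^{-1}\circ f$ construction that yields pointwise identities.
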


\paragraph{Proof Sketch:} 
The proof can be found in supplementary material. The intuition is shown in Figure \ref{fig:intuition} (b), where we turn the stochastic transition processes $\mathbf{v}_{t-2} \rightarrow \mathbf{v}_{t-1}$ and $\mathbf{v}_{t} \rightarrow \mathbf{v}_{t+1}$ into the deterministic transition processes by introducing independent noise, i.e., $(\mathbf{v}_{t-2}, \epsilon^v_{t-1}) \dashrightarrow \mathbf{v}_{t-1}$ and $(\mathbf{v}_{t}, \epsilon^v_{t+1}) \dashrightarrow \mathbf{v}_{t+1}$. Sequentially, since the mixing procedure from $\mathbf{v}_t$ to $\mathbf{x}_t$ is assumed to be deterministic, $\mathbf{v}_{t-1}$ and $\mathbf{v}_{t+1}$ are independent when $\mathbf{x}_t$ and $\mathbf{c}_t$ are conditioned. As a result, we can leverage the fact that $\epsilon^v_{t-1}, \mathbf{c}_t$, and $\epsilon^v_{t+1}$ are conditionally independent to prove that $\mathbf{c}_t$ are block-wise identifiable. Based on this intuition, we can draw the proof sketch as follows. 
First, we construct an invertible transformation between the estimated variables $(\hat{\mathbf{c}}_t, \hat{\mathbf{v}}_{t-2},\hat{\mathbf{v}}_{t},\hat{\epsilon}^v_{t-1}, \hat{\epsilon}^v_{t+1})$ and the ground-truth variables $(\mathbf{c}_t,\mathbf{v}_{t-2},\mathbf{v}_{t}, \epsilon^v_{t-1}, \epsilon^v_{t+1})$ by leveraging the deterministic transition. Sequentially, by leveraging the independence between $\mathbf{c}_t$ and $\mathbf{v}_{t-2},\mathbf{v}_{t}, \epsilon^v_{t-1}, \epsilon^v_{t+1}$, we can show that the  action variables $\mathbf{c}_t$ is not the function of $\hat{\mathbf{v}}_{t-2},\hat{\mathbf{v}}_{t}, \hat{\epsilon}^v_{t-1}$ and $\hat{\epsilon}^v_{t+1}$, implying that $\mathbf{c}_t$ is only the function of $\hat{\mathbf{c}}_t$ and $\mathbf{c}_t$ are block-wise identifiable. The identification results can be found in the supplementary material.

\section{Experiments}
\subsection{Datasets}
We consider the following benchmark datasets for weakly-supervised action segmentation. The \textbf{Breakfast} dataset \cite{kuehne2014language} contains approximately 77 hours of video material and covers 48 distinct action categories. The \textbf{CrossTask} dataset \cite{zhukov2019cross} contains 4.7K teaching videos (lasting 374 hours), covering 83 tasks and areas such as cooking and car repair. \textbf{Hollywood Extended} dataset \cite{bojanowski2014weakly} contains 937 video clips from 69 Hollywood movies. Each clip is labeled with the action sequence, and the clips are divided into time intervals of 10 frames in length. The \textbf{GTEA} dataset \cite{gtea} contains 28 videos collected from 4 participants, annotated with 11 fine-grained action categories(e.g., take, spread, pour, and stir). We follow the setting of ATBA \cite{ATBA}. Specifically, for the Breakfast dataset, we use the provided 4 training/test splits and report the mean and standard deviation. 
For the Hollywood dataset, we conduct a 10-fold cross-validation. For the CrossTask dataset, we utilize the released training/test split. For the GTEA dataset, we use the 4 training/test splits.

\subsection{Baselines}
We consider different types of baselines. As for the Viterbi-based method, we choose HMM+RNN \cite{richard2017weakly}, CDFL \cite{CDFL}, TASL \cite{TASL}, and NN-Viterbi \cite{richard2018neuralnetwork}. For the Dynamic Time Warping (DTW)-based method, we consider D3TW \cite{chang2019d3tw} and DP-DTW \cite{DPDTW}. Additionally, we evaluate other representative approaches such as TCFPN+ISBA \cite{ding2018weakly}, MuCon \cite{Mucon}, POC \cite{POC}, and AdaAct \cite{AdaAct}. We also consider the recent advanced framework like CtrlNS \cite{CtrlNS}.
% Moreover, we also consider the smoothness-based method, like boundary smoothness \cite{ATBA} and Gaussian smoothness \cite{du2022fast}. 
% For a fair comparison, the method based on Gaussian smoothness adopts the backbone networks of \cite{ATBA} and is named ATBA+GS.

\subsection{Evaluation Metric}
The evaluation metrics employed in the test include Mean-over-Frames (\textbf{MoF}), Mean-over-Frames without Background (\textbf{MoF-bg}), Intersection-over-Union (\textbf{IoU}), and Intersection-over-Detection (\textbf{IoD}). The definitions of IoU and IoD are formulated as follows:$| I\bigcap I^*|/|I\bigcup I^* |$ and $| I\bigcap I^*|/|I|$, where $I^*$ and $I$ are the ground-truth (GT) segmentation and the predicted segmentation. The definitions of IoU and IoD follow ISBA \cite{ding2018weakly}.

\begin{table}[t]
\caption{Experiment results on Breakfast dataset.}
\vspace{-1em}
\label{tab:breakfast_exp}
\resizebox{\columnwidth}{!}{%
\begin{tabular}{@{}ccccc@{}}
\toprule
\multicolumn{5}{c}{\textbf{Breakfast}}                                                                                     \\ \midrule
\multicolumn{1}{c|}{Method}                                     & MoF               & MoF-Bg            & IoU               & IoD               \\ \midrule
\multicolumn{1}{c|}{HMM+RNN\cite{richard2017weakly}}            & 33.3              & -                 & -                 & -                 \\
\multicolumn{1}{c|}{TCFPN+ISBA\cite{ding2018weakly}}            & 38.4/36.4±1.0     & 38.4              & 24.2              & 40.6              \\
\multicolumn{1}{c|}{NN-Viterbi\cite{richard2018neuralnetwork}}  & 43.0/39.7±2.4     & -                 & -                 & -                 \\
\multicolumn{1}{c|}{D3TW\cite{chang2019d3tw}}                   & 45.7              & -                 & -                 & -                 \\
\multicolumn{1}{c|}{CDFL\cite{CDFL}}                            & 50.2/48.1±2.5     & 48.0              & 33.7              & 45.4              \\
\multicolumn{1}{c|}{DP-DTW\cite{DPDTW}}                         & 50.8              & -                 & 35.6              & 45.1              \\
\multicolumn{1}{c|}{TASL\cite{TASL}}                            & 47.8              & -                 & 35.2              & 46.1              \\
\multicolumn{1}{c|}{MuCon\cite{Mucon}}                          & 48.5±1.8          & 50.3              & 40.9              & 54.0              \\
\multicolumn{1}{c|}{POC\cite{POC}}                              & 45.7              & -                 & 38.3              & 46.4              \\
\multicolumn{1}{c|}{AdaAct\cite{AdaAct}}                        & 51.2              & 48.3              & 36.3              & 46.4              \\
\multicolumn{1}{c|}{ATBA\cite{ATBA}}                            & 53.9±1.2          & 54.4±1.2          & 41.1±0.7          & 61.7±1.1          \\  \midrule
\multicolumn{1}{c|}{HAL(Ours)}                                  & \textbf{56.3}±1.3 & \textbf{57.2}±1.6                                      & \textbf{42.6}±1.9 & \textbf{62.4}±2.5 \\ \bottomrule
\end{tabular}%
}
\end{table}

\begin{table}[t]
\caption{Experiment results on the CrossTask dataset.}
\vspace{-1em}
\label{tab:Crosstask_exp}
\resizebox{\columnwidth}{!}{%
\begin{tabular}{@{}ccccc@{}}
\toprule
\multicolumn{5}{c}{\textbf{CrossTask}}                                                                                     \\ \midrule
\multicolumn{1}{c|}{Method}                                         & MoF               & MoF-Bg            & IoU               & IoD               \\ \midrule
\multicolumn{1}{c|}{NN-Viterbi\cite{richard2018neuralnetwork}}      & 26.5              & -                 & 10.7              & 24.0              \\
\multicolumn{1}{c|}{CDFL\cite{CDFL}}                                & 31.9              & -                 & 11.5              & 23.8              \\
\multicolumn{1}{c|}{TASL\cite{TASL}}                                & 40.7              & 27.4              & 14.5              & 25.1              \\
\multicolumn{1}{c|}{POC\cite{POC}}                                  & 42.8              & 17.6              & 15.6              & -                 \\
\multicolumn{1}{c|}{ATBA\cite{ATBA}}                                & 50.6±1.3          & 31.3±0.7          & 20.9±0.4          & \textbf{44.6}±0.7 \\
\multicolumn{1}{c|}{CtrlNS\cite{CtrlNS}}                            & \textbf{54.0}±0.9 & -                 & 15.7±0.5          & 23.6±0.8          \\ \midrule
\multicolumn{1}{c|}{\textbf{HAL(Ours)}}                                      & \textbf{54.0}±0.8 & \textbf{35.0}±1.1 & \textbf{21.6}±0.4 & 41.9±0.7          \\ \bottomrule

\end{tabular}%
}
\end{table}

\begin{table}[t]
\caption{Experiment results on the Hollywood dataset.}
\vspace{-1em}
\label{tab:Hollywood_exp}
\resizebox{\columnwidth}{!}{%
\begin{tabular}{@{}ccccc@{}}
\toprule
\multicolumn{5}{c}{\textbf{Hollywood Extended}}                                                                          \\ \midrule
\multicolumn{1}{c|}{Method}                                 & MoF               & MoF-Bg            & IoU               & IoD               \\ \midrule
\multicolumn{1}{c|}{HMM+RNN \cite{richard2017weakly}}       & -                 & -                 & 11.9              & -                 \\
\multicolumn{1}{c|}{TCFPN+ISBA \cite{ding2018weakly}}       & 28.7              & 34.5              & 12.6              & 18.3              \\
\multicolumn{1}{c|}{D3TW \cite{chang2019d3tw}}              & 33.6              & -                 & -                 & -                 \\
\multicolumn{1}{c|}{CDFL \cite{CDFL}}                       & 45.0              & 40.6              & 19.5              & 25.8              \\
\multicolumn{1}{c|}{TASL \cite{TASL}}                       & 42.1              & 27.2              & 23.3              & 33.0              \\
\multicolumn{1}{c|}{MuCon \cite{Mucon}}                     & -                 & 41.6              & 12.9              & -                 \\
\multicolumn{1}{c|}{ATBA \cite{ATBA}}                       & 47.7±2.5          & 40.2±1.6          & 30.9±1.6          & 55.8±0.8          \\
\multicolumn{1}{c|}{CtrlNS \cite{CtrlNS}}                   & \textbf{52.9}±3.1 & -                 & 32.7±1.3          & 52.4±1.8          \\ \midrule
\multicolumn{1}{c|}{\textbf{HAL(Ours)}}                               & 51.0±1.7          & \textbf{41.9}±3.7 & \textbf{33.4}±1.4 & \textbf{56.8}±1.8 \\ \bottomrule
\end{tabular}%
}
\end{table}

\begin{table}[]
\caption{Experiment results on the GTEA dataset.}
\vspace{-1em}
\label{tab:gtea}
\resizebox{\columnwidth}{!}{%
\begin{tabular}{@{}ccccc@{}}
\toprule
\multicolumn{5}{c}{\textbf{GTEA}}   \\ \midrule
\multicolumn{1}{c|}{Method}                                     & MoF               & MoF-Bg                                  & IoU               & IoD               \\ \midrule
\multicolumn{1}{c|}{NN-Viterbi\cite{richard2018neuralnetwork} $\sharp$}  & 25.7±2.5  & 25.9±2.8                              & 13.0±1.8  & 18.9±2.3  \\
\multicolumn{1}{c|}{CDFL\cite{CDFL} $\sharp$}                   & 31.4±3.1  & 31.9±4.0                                       & 15.5±1.4  & 22.8±2.6  \\
\multicolumn{1}{c|}{POC\cite{POC} $\sharp$}                     & 18.4±4.5  & 22.2±6.1                                       & 7.3±2.7   & 11.2±3.6  \\
\multicolumn{1}{c|}{ATBA\cite{ATBA} $\sharp$}                   & 42.2±4.3  & 42.7±5.6                                       & 24.4±1.9  & 47.2±4.1  \\
\multicolumn{1}{c|}{CtrlNS\cite{CtrlNS} $\sharp$}               & 33.3±3.9  & 33.1±5.9                                       & 18.9±2.1  & 38.3±4.4  \\ \midrule
\multicolumn{1}{c|}{\textbf{HAL(Ours)}}                                  & \textbf{45.2}±5.4 & \textbf{46.0}±6.6                      & \textbf{25.6}±2.2 & \textbf{49.2}±2.7 \\ \bottomrule
\end{tabular}
}
\end{table}

\subsection{Qualitative Results and Discussion}
\paragraph{Comparison with Existing Methods.}

Experimental results on the Breakfast, CrossTask, and Hollywood datasets are presented in Table \ref{tab:breakfast_exp}, \ref{tab:Crosstask_exp}, \ref{tab:Hollywood_exp} and \ref{tab:gtea}. The results of the Breakfast, CrossTask, and Hollywood datasets are reported by \cite{ATBA}. $\sharp$-The reported results are obtained by us via rerunning the open sources. Best results are in bold.

The proposed HAL model consistently surpasses all baselines across most action segmentation metrics, demonstrating its strong effectiveness under weak supervision. Notably, HAL achieves the highest scores on the IoU and IoD metrics across most of the datasets, highlighting its ability to capture structured, hierarchical action representations. The slightly lower IoD performance on the CrossTask and MoF on the Hollywood datasets can be attributed to the greater diversity and complexity of background scenes, which pose additional challenges for accurate background modeling.
The performance gains of our method mainly stem from the proposed smoothness regularization term, $\mathcal{L}_s$, which enforces temporal consistency in high-level action transitions. This constraint promotes better alignment between the estimated and ground-truth segmentations, leading to notable improvements in IoU and IoD. While HAL achieves comparable MoF results to ATBA and CtrlNS, its advantage lies in segmenting videos based on hierarchical action variables rather than raw visual features. By mitigating the influence of transient visual variations and irrelevant background noise, our approach yields more accurate and semantically coherent segmentation results.

\subsection{Case Studies}

We further provide qualitative case studies of action segmentation in Figure~\ref{fig:case_study}. From the visualization results, several observations can be made. (1) Our proposed HAL model exhibits stable and coherent segmentation even in complex action spaces, such as those in the Breakfast dataset. Compared with representative methods like ATBA and CtrlNS, HAL maintains smoother temporal boundaries and avoids the frequent boundary oscillations commonly observed in prior models. (2) As illustrated in Figure~\ref{fig:case_study}, segmentation based solely on visual representations (e.g., HAL-V) often produces fragmented and abrupt transitions, leading to inconsistent action boundaries and noisy temporal patterns. In contrast, our hierarchical approach, which segments based on high-level action variables, yields results that align much more closely with the ground truth. The smoother and semantically consistent transitions in our predictions highlight the benefit of modeling slowly evolving hierarchical action, which effectively suppress spurious fluctuations in the visual domain and enhance the temporal coherence of the final segmentation.

\begin{figure}
    \centering
    \includegraphics[width=\linewidth]{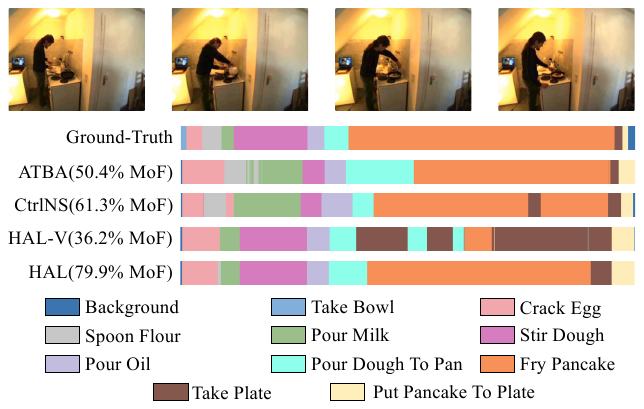}
    \caption{Qualitative results of \textit{P04-cam01-P04-pancake} on the Breakfast dataset.}
    \label{fig:case_study}
\end{figure}

\begin{table}[t]
\small
\caption{Ablation studies on the split 1 of Breakfast dataset.}
\vspace{-1em}
\label{tab:ablation}
\resizebox{\columnwidth}{!}{%
\begin{tabular}{@{}cccccccc@{}}
\toprule
\multicolumn{1}{c|}{EXP}    & $\mathcal{L}_{r}$ & $\mathcal{L}_{s}$     & $\mathcal{L}_{KL}$    & \multicolumn{1}{c|}{$\delta$}     & MoF           & IoU           & IoD           \\ \midrule
\multicolumn{1}{c|}{1}      & -                 & -                     & -                     & \multicolumn{1}{c|}{-}            & 53.3          & 40.1          & 58.7          \\
\multicolumn{1}{c|}{2}      & \ding{51}         & -                     & -                     & \multicolumn{1}{c|}{-}            & 54.3          & 38.4          & 61.6          \\
\multicolumn{1}{c|}{3}      & -                 & \ding{51}             & -                     & \multicolumn{1}{c|}{-}            & 54.6          & 40.3          & 61.6          \\
\multicolumn{1}{c|}{4}      & -                 & -                     & \ding{51}             & \multicolumn{1}{c|}{-}            & 54.5          & 42.0          & 61.0          \\
\multicolumn{1}{c|}{5}      & -                 & -                     & -                     & \multicolumn{1}{c|}{\ding{51}}    & 53.9          & 41.1          & 59.4          \\ \midrule
\multicolumn{1}{c|}{6}      & \ding{51}         & \ding{51}            & \ding{51}              & \multicolumn{1}{c|}{\ding{51}}    & \textbf{56.6} & \textbf{42.6} & \textbf{62.1} \\ \bottomrule
\end{tabular}%
}
\end{table}

\subsection{Ablation Study}
To evaluate the contribution of individual components, we conducted several experiments, summarized in Table~\ref{tab:ablation}, to assess both their separate and combined effects. The evaluation metrics include MoF, IoU, and IoD. Results from Experiments 1–5 demonstrate a steady improvement in MoF, confirming the positive impact of each component. Incorporating both $\mathcal{L}_r$ and $\mathcal{L}_{KL}$ further enhances performance, emphasizing the importance of modeling video distributions. However, adding $\mathcal{L}_r$ alone (Exp. 2) leads to overemphasis on feature reconstruction, causing overly expanded segmentations and a decrease in IoU. In Experiment 6, combining all components achieves the highest scores across all metrics, validating the effectiveness of the full framework and showing that integration strengthens the model while enabling more robust segmentation.

\begin{figure}
    \centering
    \includegraphics[width=\linewidth]{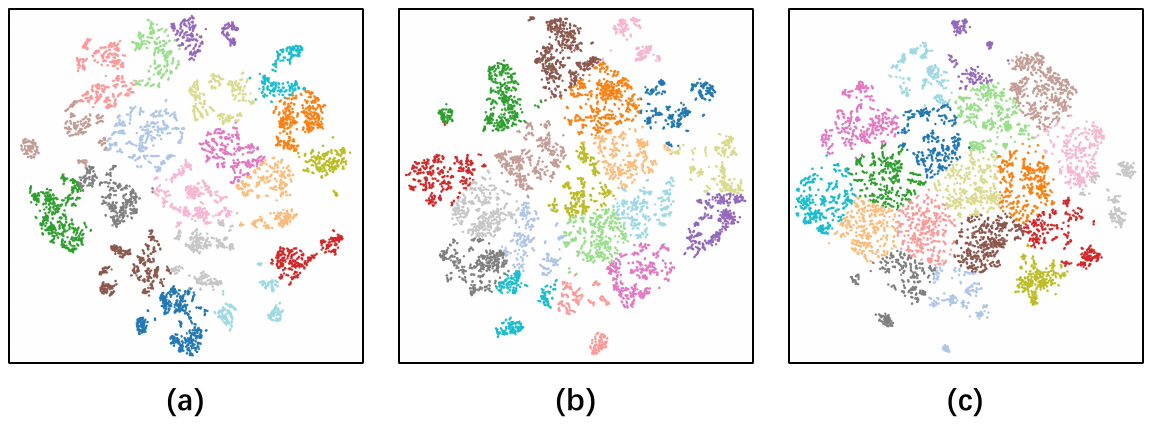}
    \caption{Illustration of T-SNE Visualization of latent variables. (a) and (b) show scatter plots of the latent action and visual variables of the propsoed method, respectively. (c) shows the output of ATBA after dimensionality reduction.}
    \label{fig:tsne}
\end{figure}

\subsection{Visualization Results}
To further show the effectiveness of hierarchical latent variables, we further provide T-SNE visualization results in Figure~\ref{fig:tsne}, where Figure~\ref{fig:tsne} (a) and (b) denote the visualization results from latent action variables and visual variables, respectively. Figure~\ref{fig:tsne} (c) illustrates the results of the ATBA model. In order to ensure a fair comparison, we visualized the same number of layers in the ATBA model as in our model's high-level action variables. According to the visualization results, we observe the following conclusions: 1) Compared to ATBA, our method produces denser clusters, indicating that our results are more stable and compact latent variable representations. This denser clustering suggests our model effectively avoids over-segmentation by leveraging high-level action structures to reduce noise from visual features. 2) High-level latent variables show greater cohesion than low-level variables. This demonstrates that high-level action variables are more stable and align better with ground-truth segmentation, validating the effectiveness of our approach in capturing hierarchical dynamics.

\section{Summary}
This paper proposes a Hierarchical Action Learning (\textbf{HAL}) model for weakly supervised action segmentation, addressing over-segmentation and noisy boundaries that arise from reliance on low-level visual cues. Unlike existing approaches, \textbf{HAL} introduces a hierarchical causal framework in which slowly evolving high-level Action variables govern the dynamics of rapidly changing visual variables. Technically, the model combines an augmented generative process, a pyramidal transformer, and temporal smoothness constraints to disentangle Action and visual representations and capture multi-scale temporal structure. Theoretically, the model guarantees identifiability of high-level Action variables under mild assumptions. Overall, this work provides both a principled and effective approach to improving weakly supervised action segmentation through hierarchical causal reasoning. Extending this idea to video generation will be an interesting future direction.

\section{Acknowledgment}
This research was supported in part by National Science and Technology Major Project (2021ZD0111502),  Natural Science Foundation of China (U24A20233) and  CCF-DiDi GAIA Collaborative Research Funds (CCF-DiDi GAIA 202521).

{
    \small
    \bibliographystyle{ieeenat_fullname}
    \bibliography{main}
}

% WARNING: do not forget to delete the supplementary pages from your submission 
\clearpage
\setcounter{page}{1}
\maketitlesupplementary

\section{\textcolor{black}{Extended Related Work}}\label{app:related}

\subsection{Weakly-Supervised Action Segmentation}
Weakly-supervised action segmentation approaches \cite{bojanowski2014weakly,connectionistTemporal,kuehne2017weakly, RNN-HMM,richard2017weakly, CDFL, TASL,richard2018neuralnetwork,chang2019d3tw, DPDTW, AdaAct} seek to reduce the dependence on detailed frame-level annotations.  In the context of action segmentation, four primary forms of weak supervision have been studied: transcripts, action sets, timestamps, and textual descriptions (e.g., narrations). Among them, transcripts provide an action sequence without frame-level alignment, while action sets offer an unordered collection of actions appearing in the video. Timestamps indicate the occurrence of specific actions at particular time points.

A transcript provides a sequential list of actions occurring in a video, but does not include precise frame-level timing information. This type of supervision offers substantial advantages in terms of annotation efficiency, as it removes the requirement for detailed, frame-wise labeling. Approaches utilizing transcripts generally adopt either a single-stage design or an iterative two-step learning paradigm.  \textbf{Iterative two-stage methods} first generate initial frame-level labels from the action transcript, then refine them through repeated updates. HTK \cite{kuehne2017weakly} adapts a supervised HMM-GMM model \cite{kuehne2016end} to weak supervision by uniformly initializing segments and refining them using transcripts. Richard et al. \cite{richard2017weakly,RNN-HMM} replace GMMs with RNNs and introduce latent sub-actions for finer motion modeling. ISBA \cite{ding2018weakly} starts with uniform segmentation and gradually adjusts boundaries using soft labels. TASL \cite{TASL} iteratively aligns video frames with transcripts to improve segmentation. \textbf{Single-stage methods} aim to overcome the initialization sensitivity of two-stage approaches. ECTC \cite{connectionistTemporal} aligns transcripts with video frames using temporal classification while enforcing consistency. NN-Viterbi \cite{richard2018neuralnetwork} integrates visual, context, and length models, generating pseudo-labels via Viterbi decoding. D3TW \cite{chang2019d3tw} introduces a differentiable loss to distinguish between correct and incorrect transcripts. Building on this, CDFL \cite{CDFL} uses a segmentation graph to model valid/invalid alignments and compares their energies. Despite strong performance, NN-Viterbi and CDFL incur high computational costs. To address this, MuCon \cite{Mucon} reduces training time with a dual-branch design enforcing mutual consistency.  DP-DTW  \cite{DPDTW} learns class-specific prototypes and improves action discrimination for temporal recognition.

Action sets specify only the presence of actions in a video, without indicating their order or frequency, making them a weaker supervision form than transcripts. Early work by Richard et al. \cite{richard2018action} models temporal segmentation under action set supervision by integrating contextual, duration, and action models to infer the most probable action sequence for a given video, formulated as a maximum likelihood problem solvable using the Viterbi algorithm. Later methods, such as SCT \cite{fayyaz2020sct}, directly learn segmentation from action sets through region-wise predictions and frame-level consistency constraints. SCV \cite{li2020set} enhances pseudo-label generation via a constrained Viterbi approach and an $n$-pair loss. ACV \cite{li2021anchor} extends SCV by proposing a differentiable formulation that enables end-to-end training and eliminates the reliance on post-processing steps. POC \cite{POC} proposes a loss formulation that leverages the cross-video consistency of action pair ordering to enforce temporal alignment between the predicted segments and the
extracted templates.

Timestamp supervision provides sparse frame annotations instead of dense labels, offering a cost-effective alternative to full supervision. Some methods \cite{li2021temporal,khan2022timestamp}  assume one timestamp per action, while others \cite{rahaman2022generalized} support arbitrary sampling. Typically, these approaches generate pseudo frame-wise labels that are refined iteratively. Li et al. \cite{li2021temporal} detect action transitions and apply a confidence loss to guide predictions around timestamps. EM-TSS \cite{rahaman2022generalized} employs an EM framework to infer full sequences from sparse labels, offering greater flexibility in annotation and outperforming approaches constrained to one timestamp per action. GCN-TSS \cite{khan2022timestamp} introduces a GNN-based approach where video frames are modeled as graph nodes, with edges weighted by feature similarity. Labels from sparsely annotated frames are propagated across the graph to infer full segmentations, assuming all action instances are timestamped. In contrast, RAS-TSS \cite{souri2022robust} relaxes this assumption by allowing incomplete annotations and expands segment boundaries around timestamps to reduce ambiguity and better handle missing labels.

Narrations and subtitles, often available alongside instructional videos, offer a valuable but weak supervisory signal for temporal understanding. Prior work has leveraged such text for alignment \cite{malmaud2015s,bojanowski2015weakly} or step localization \cite{zhukov2019cross,alayrac2016unsupervised}  and recent efforts in multi-modal learning \cite{radford2021learning,shvetsova2022everything} explore joint embeddings across vision, language, and occasionally audio. However, a key limitation is the frequent misalignment between textual and visual content. To address this, Sener et al. \cite{sener2015unsupervised} propose a hybrid generative model combining visual and textual vocabularies across videos for shared action discovery, while Fried et al. \cite{fried2020learning} assume a canonical transcript per activity class and use a semi-Markov model for structured alignment. More recent approaches \cite{han2022temporal} focus on aligning narrations to specific segments within instructional videos.

\subsection{Causal Representation Learning} 

To recover latent variables with theoretical guarantees of identifiability \cite{yao2023multi,scholkopf2021toward,liu2023causal,gresele2020incomplete}, Independent Component Analysis (ICA) has been widely employed for uncovering causal representations \cite{rajendran2024learning,mansouri2023object,wendong2024causal,li2024identification}. Traditional ICA approaches typically rely on the assumption that observed variables are generated through a linear mixture of latent  variables \cite{comon1994independent,hyvarinen2013independent,lee1998independent,zhang2007kernel}.  However, identifying such a linear mixing function is  nontrivial in practical, real-world settings. To establish identifiability in more general nonlinear ICA models, additional assumptions are introduced, such as sparsity in the data generation process and the incorporation of auxiliary variables \cite{zheng2022identifiability,hyvarinen1999nonlinear,hyvarinen2024identifiability,khemakhem2020ice,li2023identifying}. 

In particular, the work of Aapo et al. provides the first theoretical foundation for identifiability in such settings \cite{khemakhem2020variational}. When latent sources are assumed to follow distributions from the exponential family, identifiability can be achieved by integrating auxiliary information such as domain indices, temporal markers, or class labels \cite{khemakhem2020variational,hyvarinen2016unsupervised,hyvarinen2017nonlinear, hyvarinen2019nonlinear}. Moreover, recent studies \cite{kong2022partial, xie2023multi,kong2023identification,yan2024counterfactual} demonstrate that the identifiability of individual components in nonlinear ICA can be achieved without relying on exponential family assumptions. 

To enable identifiability in unsupervised scenarios, many approaches have introduced structural assumptions such as sparsity in the generative mechanisms \cite{zheng2022identifiability,hyvarinen1999nonlinear,hyvarinen2024identifiability,khemakhem2020ice,li2023identifying}. For instance, Lachapelle et al. \cite{lachapelle2023synergies, lachapelle2022partial} propose mechanism sparsity regularization as an inductive bias to isolate distinct causal latent factors. Zhang et al. \cite{zhang2024causal} utilize sparse latent structures and establish identifiability under conditions of distributional shifts, without requiring explicit supervision. Besides, nonlinear ICA is extended to the time-series setting for learning identifiable representations from temporal data \cite{hyvarinen2016unsupervised,yan2024counterfactual,huang2023latent,halva2020hidden,lippe2022citris}. For example, Aapo et al. \cite{hyvarinen2016unsupervised} propose a theoretical framework based on nonstationary variances across time segments to identify latent sources in temporally varying data. In the stationary case, researchers utilize permutation-based contrastive learning methods to disentangle latent components.

More recent advances incorporate independent noise and variability history information. Importantly, TDRL~\cite{yao2022temporally}  and LEAP~\cite{yao2021learning} exploit such features to facilitate identifiability. CHiLD~\cite{child}, a recently proposed framework for hierarchical temporal causal representation learning, leverages temporal contextual observations and hierarchical sparsity to achieve identifiability of multi-layer latent dynamics. By introducing a nonstationary sparse transition assumption,  CtrlNS~\cite{CtrlNS} establishes identifiability from a theoretical perspective and empirically demonstrates its framework’s effectiveness in uncovering latent factors and distribution shifts. Although CtrlNS can also be applied to the action segmentation task, its latent variables are limited to a single layer, which restricts its ability to capture the varying dynamics across different levels of latent factors.

% % proof
% #############################
\section{Proof}
\label{app:proof}
\subsection{Useful Theorems and Lemmas}
\begin{theorem}(Theorem XV 4.5 in \citep{dunford1988linear} Part $\uppercase\expandafter{\romannumeral 3}$) \label{thm:dunford}
    A bounded operator \( T \) is a spectral operator if and only if it is the sum \( T = S + N \) of a bounded scalar type operator \( S \) and a quasi-nilpotent operator \( N \) commuting with \( S \). Furthermore, this decomposition is unique and \( T \) and \( S \) have the same spectrum and the same resolution of the identity.
    \end{theorem}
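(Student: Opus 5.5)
The plan is to work throughout with the definition of a spectral operator in \citep{dunford1988linear}. A bounded operator $T$ on a Banach space $\mathfrak{X}$ is spectral if there is a countably additive, uniformly bounded projection-valued measure $E$ on the Borel sets of $\mathbb{C}$ with three properties: each $E(\delta)$ commutes with $T$, $E(\mathbb{C})=I$, and $\sigma(T|_{E(\delta)\mathfrak{X}})\subseteq\overline{\delta}$. A scalar type operator is one of the form $S=\int\lambda\,E(d\lambda)$. The proof has three parts: the forward direction (construct $S,N$), the converse (show $T$ is spectral), and uniqueness together with the equality of spectra.

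\textbf{Forward direction.} Let $T$ be spectral with resolution of the identity $E$.
\begin{enumerate}[leftmargin=*]
\item Define $S=\int_{\sigma(T)}\lambda\,E(d\lambda)$. This is bounded because $\sigma(T)$ is compact and $\sup_\delta\|E(\delta)\|=K<\infty$. By construction $S$ is of scalar type.
\item Set $N=T-S$. Since $T$ commutes with every $E(\delta)$, it commutes with $S$, which is a norm limit of combinations $\sum\lambda_iE(\delta_i)$. Hence $N$ commutes with $S$.
\item To show $N$ is quasi-nilpotent, fix $\varepsilon>0$. Partition $\sigma(T)$ into finitely many Borel sets $\delta_i$ of diameter less than $\varepsilon$, and pick $\lambda_i\in\delta_i$. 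Then:
\begin{itemize}[leftmargin=*]
\item $\|S-\sum_i\lambda_iE(\delta_i)\|\le 4K\varepsilon$.
\item On the invariant subspace $E(\delta_i)\mathfrak{X}$, the operator $T-\lambda_i$ has spectral radius at most $\varepsilon$, because $\sigma(T|_{E(\delta_i)\mathfrak{X}})\subseteq\overline{\delta_i}$.
\end{itemize}
\item Restricted to $E(\delta_i)\mathfrak{X}$, $N$ equals the commuting difference $(T-\lambda_i)-(S-\lambda_i)$. The spectral radius is subadditive for commuting operators, so $r(N|_{E(\delta_i)\mathfrak{X}})\le(1+4K)\varepsilon$.
\item $\mathfrak{X}$ is the finite direct sum of the $E(\delta_i)\mathfrak{X}$, and $N$ reduces this sum. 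Therefore $r(N)=\max_i r(N|_{E(\delta_i)\mathfrak{X}})\le(1+4K)\varepsilon$. Letting $\varepsilon\to 0$ gives $r(N)=0$.
\end{enumerate}

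\textbf{Converse.} Let $T=S+N$, where $S$ is scalar type with resolution $E$ and $N$ is quasi-nilpotent commuting with $S$.
\begin{enumerate}[leftmargin=*]
\item The key input is the commutativity theorem for scalar (more generally spectral) operators: a bounded operator commuting with $S$ commutes with every $E(\delta)$. Applied to $N$, this gives that $E(\delta)$ commutes with $N$, and hence with $T$.
\item Fix $\delta$ and $\lambda\notin\overline{\delta}$. On $E(\delta)\mathfrak{X}$, the operator $S-\lambda$ is invertible with inverse $R=\int_\delta(\mu-\lambda)^{-1}E(d\mu)$.
\item Write $T-\lambda=(S-\lambda)(I+RN)$ on $E(\delta)\mathfrak{X}$. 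Since $R$ commutes with $N$ and $N$ is quasi-nilpotent, $RN$ is quasi-nilpotent, so $I+RN$ is invertible.
\item Hence $\sigma(T|_{E(\delta)\mathfrak{X}})\subseteq\overline{\delta}$, so $E$ is a resolution of the identity for $T$ and $T$ is spectral.
\end{enumerate}

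\textbf{Uniqueness and spectra.}
\begin{enumerate}[leftmargin=*]
\item I would invoke (or prove) that a spectral operator has a unique resolution of the identity. The route is through the single-valued extension property: $E(\delta)\mathfrak{X}$ is then characterized intrinsically, for closed $\delta$, as the set of $x$ whose local spectrum lies in $\delta$.
\item Given any decomposition $T=S'+N'$ of the stated kind, the converse shows that the resolution of $S'$ is a resolution for $T$. By uniqueness it equals $E$.
\item Therefore $S'=\int\lambda\,E(d\lambda)=S$, and then $N'=N$.
\item The statement that $T$ and $S$ have the same resolution of the identity is exactly what was just shown.
\item Equality of spectra follows because $\sigma(A+B)\subseteq\sigma(A)+\sigma(B)$ for commuting $A,B$. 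Applied with the quasi-nilpotent $\pm N$, this gives $\sigma(T)\subseteq\sigma(S)$ and $\sigma(S)\subseteq\sigma(T)$.
\end{enumerate}

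I expect the main obstacle to be the two structural facts used above: the commutativity theorem (the Fuglede-type step that $N$ commutes with $E$), and the uniqueness of the resolution of the identity. The quasi-nilpotency estimate is routine once these are in place. For the commutativity step, I would first try to show that $N$ leaves each spectral subspace $E(\delta)\mathfrak{X}$ invariant, with $\delta$ closed, using local resolvents of $S$. I would then use complements to get that $N$ commutes with each $E(\delta)$.
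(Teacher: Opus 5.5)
The paper gives no proof of this statement. It quotes the result from Dunford--Schwartz and uses it as a black box in the proof of Lemma~1, so the comparison is between a citation and your self-contained argument.

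Your outline is correct and is essentially the classical Dunford--Schwartz proof:
\begin{itemize}
\item quasi-nilpotency of $N=T-S$ from a fine partition of $\sigma(T)$ together with subadditivity of the spectral radius for commuting operators;
\item the converse from the commutativity theorem and the Neumann series for $(I+RN)^{-1}$;
\item uniqueness from uniqueness of the resolution of the identity;
\item equality of spectra from $\sigma(A+B)\subseteq\sigma(A)+\sigma(B)$ for commuting $A,B$.
\end{itemize}

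Two steps are used silently and each deserves a line.

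\textbf{The spectral measure is supported on $\sigma(T)$.} You need $E(\mathbb{C}\setminus\sigma(T))=0$ for two things: for $S=\int_{\sigma(T)}\lambda\,E(d\lambda)$ to have $E$ as its resolution of the identity, and for $\mathfrak{X}=\bigoplus_i E(\delta_i)\mathfrak{X}$. It holds for the following reason. Take compact $\delta\subset\mathbb{C}\setminus\sigma(T)$. Since $E(\delta)$ commutes with $T$, one has $\sigma(T)=\sigma(T|_{E(\delta)\mathfrak{X}})\cup\sigma(T|_{(I-E(\delta))\mathfrak{X}})$. Hence $\sigma(T|_{E(\delta)\mathfrak{X}})\subseteq\overline{\delta}\cap\sigma(T)=\emptyset$. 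A bounded operator on a nonzero space has nonempty spectrum, so $E(\delta)\mathfrak{X}=\{0\}$. Countable additivity then gives the claim for the open set $\mathbb{C}\setminus\sigma(T)$.

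\textbf{The commutativity step.} The complement of a closed $\delta$ is open, so the local-spectrum characterization of $E(\delta)\mathfrak{X}$ for closed $\delta$ does not directly give $N$-invariance of $(I-E(\delta))\mathfrak{X}$.
\begin{itemize}
\item Write $\mathbb{C}\setminus\delta$ as an increasing union of closed sets $F_n$. For $x\in E(\mathbb{C}\setminus\delta)\mathfrak{X}$, strong countable additivity gives $x=\lim_n E(F_n)x$.
\item Each $NE(F_n)x$ lies in $E(F_n)\mathfrak{X}\subseteq E(\mathbb{C}\setminus\delta)\mathfrak{X}$, so $Nx$ does too.
\item Hence both $E(\delta)\mathfrak{X}$ and $(I-E(\delta))\mathfrak{X}$ are $N$-invariant, i.e.\ $NE(\delta)=E(\delta)N$ for closed $\delta$.
\item Pass to all Borel sets by a Dynkin $\pi$--$\lambda$ argument: $\{\delta:NE(\delta)=E(\delta)N\}$ is closed under complements and countable disjoint unions, and the closed sets form a $\pi$-system generating the Borel sets.
\end{itemize}

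Both points are standard lemmas from Section~XV.3 of Dunford--Schwartz, as is the uniqueness of the resolution of the identity via the single-valued extension property. With them your argument is complete. It also makes explicit which structural inputs the paper is relying on when it simply cites the result.
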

\begin{lemma}(Lemma 1 in~\citep{hu2008instrumental})
    \label{label:lemma}
    Under Assumption A2, if $L_{z|x}$ is injective, then $L_{x|z}^{-1}$ exists and is densely defined over $\mathscr{G}(\mathscr{X})$ (for $\mathscr{G} = \mathscr{L}^1, \mathscr{L}^1_{\mathrm{bnd}}$).
    \end{lemma}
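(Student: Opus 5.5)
The plan is to reduce both claims, existence and dense definition of $L_{x|z}^{-1}$, to properties of the adjoint of $L_{x|z}$. The key observation is that, by Bayes' rule, this adjoint coincides with $L_{z|x}$ up to multiplication by positive densities. The argument works at the level of the function spaces $\mathscr{G}(\mathscr{X})$ and $\mathscr{G}(\mathscr{Z})$ with $\mathscr{G}=\mathscr{L}^1$ or $\mathscr{L}^1_{\mathrm{bnd}}$. It uses the bounded, continuous, positive densities from Assumption A1 so that all multiplication operators below are well defined and invertible.

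\textbf{Step 1 (Bayes factorization).} Write $p_{x|z}(x|z)=p_{z|x}(z|x)\,p_x(x)/p_z(z)$. Let $D_{p_x}$ and $D_{1/p_z}$ denote the operators of multiplication by $p_x$ and by $1/p_z$. For any $g\in\mathscr{G}(\mathscr{Z})$,
\begin{equation*}
(L_{x|z}g)(x)=p_x(x)\int p_{z|x}(z|x)\,\frac{g(z)}{p_z(z)}\,dz ,
\end{equation*}
so that $L_{x|z}=D_{p_x}\,L_{z|x}^{\ast}\,D_{1/p_z}$. Here $L_{z|x}^{\ast}$ is the operator with the transposed kernel $p_{z|x}(z|x)$, integrated over $z$. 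Dually, the adjoint $L_{x|z}^{\ast}$, acting on bounded functions, is $D_{1/p_z}L_{z|x}D_{p_x}$ up to this transposition.

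\textbf{Step 2 (existence of the inverse).} Injectivity of $L_{x|z}$ is all that is needed for $L_{x|z}^{-1}$ to exist as a map from $\mathrm{Range}(L_{x|z})$ back to $\mathscr{G}(\mathscr{Z})$. Using the factorization, injectivity of $L_{x|z}$ amounts to injectivity of the transposed operator $L_{z|x}^{\ast}$ on $D_{1/p_z}\mathscr{G}(\mathscr{Z})$. This is the part of Assumption A2 imposed on the reverse direction (in the application, on $L_{\mathbf{x}_{t-1}|\mathbf{x}_{t+1}}$ together with $L_{\mathbf{x}_{t+1}|\mathbf{v}_t,\mathbf{c}_t}$). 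I would invoke it directly rather than try to derive it.

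\textbf{Step 3 (dense domain).} The domain of $L_{x|z}^{-1}$ is $\mathrm{Range}(L_{x|z})$. By the Hahn--Banach annihilator argument, this range is dense in $\mathscr{G}(\mathscr{X})$ iff no nonzero bounded functional annihilates it, i.e.\ iff the adjoint $L_{x|z}^{\ast}$ is injective on the dual, which consists of bounded functions. Take $h$ bounded with $\int h(x)\,p_{x|z}(x|z)\,dx=0$ for all $z$. By Step 1 this is equivalent to
\begin{equation*}
\frac{1}{p_z(z)}\int p_{z|x}(z|x)\,p_x(x)h(x)\,dx=0 \quad \text{for all } z,
\end{equation*}
which says $L_{z|x}(p_x h)=0$. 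Since $p_x h\in\mathscr{L}^1$ (and is bounded under A1), injectivity of $L_{z|x}$ forces $p_x h=0$, hence $h=0$ on the support where $p_x>0$. Thus the range is dense.

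The main obstacle is Step 3's functional-analytic bookkeeping. One must check that the dual pairing is correct for $\mathscr{L}^1_{\mathrm{bnd}}$, whose dual is not simply $\mathscr{L}^\infty$ when one uses the mixed norm. One must also check that the multiplied function $p_x h$ lies in the space on which $L_{z|x}$ is assumed injective. I would handle this by working first with $\mathscr{G}=\mathscr{L}^1$, where the dual is $\mathscr{L}^\infty$ and boundedness of $p_x$ from A1 suffices. For $\mathscr{L}^1_{\mathrm{bnd}}$, I would then note that it is a subspace of $\mathscr{L}^1$ that is dense in $\mathscr{L}^1$, so density of the range transfers, following Hu and Schennach's original treatment.
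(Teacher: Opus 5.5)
The paper gives no argument for this lemma; it only cites Hu--Schennach, so your proposal has to stand on its own. Step~3 does. Hahn--Banach reduces density of $\mathrm{Range}(L_{x|z})$ to injectivity of $L_{x|z}^{\ast}$ on bounded $h$. Bayes' rule gives $L_{x|z}^{\ast}h=p_z^{-1}L_{z|x}(p_xh)$. Injectivity of $L_{z|x}$ on the bounded integrable function $p_xh$ then finishes it. Routing $\mathscr{L}^1_{\mathrm{bnd}}$ through $\mathscr{L}^1$-density is the right choice: under A1 the range consists of continuous functions, so it could not be dense in the mixed norm. You also do not need the multiplication operators to be boundedly invertible, which A1 does not give; cancelling $p_x>0$ pointwise on the support is enough.

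The gap is Step~2. A two-sided $L_{x|z}^{-1}$ exists only if $L_{x|z}$ is injective, and A2 does not contain that.

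\begin{itemize}
\item In the application, $x=\mathbf{x}_{>t}$ and $z=\mathbf{x}_{<t}$. By Eq.~(\ref{the1_6}), $L_{\mathbf{x}_{>t}|\mathbf{x}_{<t}}=L_{\mathbf{x}_{>t}|\mathbf{z}_t}L_{\mathbf{z}_t|\mathbf{x}_{<t}}$ is injective iff $L_{\mathbf{z}_t|\mathbf{x}_{<t}}$ is.
\item A2 controls only $L_{\mathbf{x}_{>t}|\mathbf{z}_t}$ and $L_{\mathbf{x}_{<t}|\mathbf{x}_{>t}}=L_{\mathbf{x}_{<t}|\mathbf{z}_t}L_{\mathbf{z}_t|\mathbf{x}_{>t}}$. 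These constrain the opposite direction, so they do not combine to give what you claim.
\end{itemize}

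Nor can injectivity of $L_{z|x}$ supply it. Take $z=(z_1,z_2)$ with $z_1=x+e$, where $e$ has Gaussian density $\varphi$, and $z_2$ independent of $(x,e)$.

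\begin{itemize}
\item $(L_{z|x}f)(z_1,z_2)=p_{z_2}(z_2)\,(\varphi\ast f)(z_1)$ is injective, because $\hat{\varphi}$ never vanishes.
\item But $p_{x|z}=p_{x|z_1}$, so $L_{x|z}$ annihilates every $a(z_1)b(z_2)$ with $\int b=0$. Taking $a,b$ bounded and integrable, this counterexample works in both $\mathscr{L}^1$ and $\mathscr{L}^1_{\mathrm{bnd}}$.
\end{itemize}

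So the only true content of the lemma is density of $\mathrm{Range}(L_{x|z})$, which Step~3 proves. To close the argument you have two options:

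\begin{itemize}
\item Add injectivity of $L_{x|z}$ as a separate hypothesis.
\item Read $L_{x|z}^{-1}$ as a linear right inverse on $\mathrm{Range}(L_{x|z})$, obtained by inverting $L_{x|z}$ on an algebraic complement of its kernel; its existence is trivial.
\end{itemize}

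The second reading is all Eq.~(\ref{equ:trueLDL}) needs. Since $L_{\mathbf{x}_{>t},\mathbf{x}_t|\mathbf{x}_{<t}}=L_{\mathbf{x}_{>t}|\mathbf{z}_t}D_{\mathbf{x}_t|\mathbf{z}_t}L^{-1}_{\mathbf{x}_{>t}|\mathbf{z}_t}L_{\mathbf{x}_{>t}|\mathbf{x}_{<t}}$ factors through $L_{\mathbf{x}_{>t}|\mathbf{x}_{<t}}$, its product with any right inverse is the same operator on that dense range.
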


\subsection{Proof of Block-wise Identifiability of Latent Action and Visual Variables.}
\begin{lemma}
(\textbf{Block-wise Identification of $(\mathbf{v}_t, \mathbf{c}_t)$.}) Suppose the observed, latent visual, and latent action variables follow the augmented data generation process in Figure \ref{fig:Model}(a). By matching the true joint distribution of 5 numbers of adjacent video frames, i.e., $\{\mathbf{x}_{t-2},\mathbf{x}_{t-1},\mathbf{x}_t,\mathbf{x}_{t+1},\mathbf{x}_{t+2}\}$, we further make the following assumptions:
\begin{itemize}[leftmargin=*,  itemsep=5pt]  
    \item A1 \underline{(\textbf{Bounded and Continuous Density}):} The joint distribution of $\mathbf{x}_t$,$\mathbf{v}_t$,$\mathbf{c}_t$, and their marginal and conditional densities are bounded and continuous.
    \item A2 \underline{(\textbf{Injective Linear Operators}):} The linear operators $L_{\mathbf{x}_{t+1} | \mathbf{v}_{t}, \mathbf{c}_{t}}$ and $L_{\mathbf{x}_{t-1} | \mathbf{x}_{t+1}}$ are injective for bounded function space.
    \item A3 \underline{ (\textbf{Positive Density}):}For all $[\mathbf{v}_{t}, \mathbf{c}_{t}],[\mathbf{v}_{t}, \mathbf{c}_{t}]' \in \mathcal{V}_{t} \cup \mathcal{C}_{t}$ with $[\mathbf{v}_{t}, \mathbf{c}_{t}] \neq [\mathbf{v}_{t}, \mathbf{c}_{t}]'$ the set $\{ \mathbf{x}_{t}:p(\mathbf{x}_{t}|\mathbf{v}_{t}, \mathbf{c}_{t}) \neq p(\mathbf{x}_{t}| \mathbf{v}_{t}', \mathbf{c}_{t}') \}$ has positive probability.
\end{itemize}
Suppose that the learned $(\hat{g}, \hat{f}, p_{\hat{\epsilon}})$ to achieve Equation (\ref{equ:x_gen}) - (\ref{equ:concept_gen}), then the latent variables $(\mathbf{v}_t, \mathbf{c}_t)$ are block-wise identifiable.
\end{lemma}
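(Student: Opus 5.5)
We follow the Hu--Schennach spectral-decomposition argument, treating $\mathbf{z}_t:=(\mathbf{v}_t,\mathbf{c}_t)$ as a single latent state. The outer observations are pooled, and $\mathbf{x}_t$ plays the role of the ``eigenvalue'' variable. We use $\mathbf{x}_{t+1}$ and $\mathbf{x}_{t-1}$ as the two instruments, with $\mathbf{x}_{t+2}$ and $\mathbf{x}_{t-2}$ attached to them. The extra frames are needed because, in the augmented process of Figure~\ref{fig:Model}(a), $\mathbf{c}_t$ is tied to its neighbours only through possibly deterministic pseudo-state links. Conditioning on $\mathbf{z}_t$ alone must therefore separate the past block $(\mathbf{x}_{t-2},\mathbf{x}_{t-1})$ from the future block $(\mathbf{x}_{t+1},\mathbf{x}_{t+2})$.

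\textbf{Step 1 (conditional independence).} From Equations~(\ref{equ:x_gen})--(\ref{equ:concept_gen}) and the Markov structure of the augmented graph, we verify that $\mathbf{x}_{t-2},\mathbf{x}_{t-1}$, $\mathbf{x}_t$, and $\mathbf{x}_{t+1},\mathbf{x}_{t+2}$ are mutually independent given $\mathbf{z}_t$. The argument is that $\mathbf{x}_t=g(\mathbf{v}_t)$ is a deterministic function of $\mathbf{z}_t$, and every path from past to future passes through $\mathbf{v}_t$ or $\mathbf{c}_t$. This yields
\begin{equation*}
p(\mathbf{x}_{t-2:t+2})=\int p(\mathbf{x}_{t+1:t+2}\mid\mathbf{z}_t)\,p(\mathbf{x}_t\mid\mathbf{z}_t)\,p(\mathbf{z}_t,\mathbf{x}_{t-2:t-1})\,d\mathbf{z}_t .
\end{equation*}

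\textbf{Step 2 (operator factorization and diagonalization).} Fix a value $\mathbf{x}_t$ and integrate out, or fix, $\mathbf{x}_{\pm2}$. In operator form this gives
\begin{equation*}
L_{\mathbf{x}_{t+1},\mathbf{x}_t,\mathbf{x}_{t-1}}=L_{\mathbf{x}_{t+1}\mid\mathbf{z}_t}\,D_{\mathbf{x}_t\mid\mathbf{z}_t}\,L_{\mathbf{z}_t,\mathbf{x}_{t-1}},
\end{equation*}
where $D$ is the multiplication operator by $p(\mathbf{x}_t\mid\cdot)$. Integrating over $\mathbf{x}_t$ gives $L_{\mathbf{x}_{t+1},\mathbf{x}_{t-1}}=L_{\mathbf{x}_{t+1}\mid\mathbf{z}_t}L_{\mathbf{z}_t,\mathbf{x}_{t-1}}$. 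By A2 and Lemma~\ref{label:lemma}, the relevant inverses exist on a dense domain, and we obtain
\begin{equation*}
L_{\mathbf{x}_{t+1},\mathbf{x}_t,\mathbf{x}_{t-1}}L_{\mathbf{x}_{t+1},\mathbf{x}_{t-1}}^{-1}=L_{\mathbf{x}_{t+1}\mid\mathbf{z}_t}\,D_{\mathbf{x}_t\mid\mathbf{z}_t}\,L_{\mathbf{x}_{t+1}\mid\mathbf{z}_t}^{-1}.
\end{equation*}
The left side depends only on the observed distribution, so the same identity holds for the learned model with $\hat{\mathbf{z}}_t$.

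\textbf{Step 3 (uniqueness and block identifiability).} We invoke Theorem~\ref{thm:dunford}: the decomposition into eigenvalues $p(\mathbf{x}_t\mid\mathbf{z}_t)$ and eigenfunctions $p(\mathbf{x}_{t+1}\mid\mathbf{z}_t)$ is unique up to three ambiguities, which we remove in turn.
\begin{itemize}
\item \emph{Scaling}: this is fixed because eigenfunctions are densities that integrate to one.
\item \emph{Ordering}: this is fixed by A3. Distinct $\mathbf{z}_t$ give distinct eigenvalues on a set of positive probability, so eigenspaces are one-dimensional.
\item \emph{Relabeling}: after the first two are removed, the eigen-decomposition is unique up to a bijective relabeling $\mathbf{z}_t=h(\hat{\mathbf{z}}_t)$.
\end{itemize}
We then use A1 (continuity and boundedness), together with the invertibility of $g$ and $\hat g$, to argue that $h$ is an invertible function on $\mathbb{R}^{n_v+n_c}$. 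This is exactly Definition~\ref{eq:block_iden}.

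\textbf{Main obstacle.} Step 1 is the hardest part. With deterministic pseudo-state transitions, $\mathbf{c}_{t\pm1}$ may equal functions of $\mathbf{c}_t$, and $\mathbf{v}_{t\pm1}$ depend on both $\mathbf{v}_t$ and $\mathbf{c}_{t\pm1}$. We must check carefully that no path bypasses $\mathbf{z}_t$, in particular the path $\mathbf{c}_{t-1}\to\mathbf{c}_t\to\mathbf{c}_{t+1}$. We also need to justify why conditioning on the outer frames $\mathbf{x}_{t\pm2}$ as well keeps the injectivity in A2 valid. If the raw form fails, the fallback is to take the instruments as the pairs $(\mathbf{x}_{t+1},\mathbf{x}_{t+2})$ and $(\mathbf{x}_{t-2},\mathbf{x}_{t-1})$ and restate A2 for these joint operators.
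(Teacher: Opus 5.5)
Your proposal is essentially the paper's proof. It uses the same Hu--Schennach factorization with $\mathbf{z}_t=(\mathbf{v}_t,\mathbf{c}_t)$ as the latent state and $\mathbf{x}_t$ as the eigenvalue variable, inverts via A2 and Lemma~\ref{label:lemma}, gets uniqueness from Theorem~\ref{thm:dunford}, fixes scaling by normalization, and uses A3 to make $\mathbf{z}_t\mapsto p(\mathbf{x}_t\mid\mathbf{z}_t)$ injective, so that only a relabeling $\mathbf{z}_t=h(\hat{\mathbf{z}}_t)$ remains. The only real difference is that the paper runs the argument directly with your fallback, pooling $\mathbf{x}_{<t}=(\mathbf{x}_{t-2},\mathbf{x}_{t-1})$ and $\mathbf{x}_{>t}=(\mathbf{x}_{t+1},\mathbf{x}_{t+2})$, invoking A2 for these pooled operators, and giving a dimension count for why single frames cannot suffice. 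Your primary route instead applies A2 literally to single frames, and therefore never actually uses $\mathbf{x}_{t\pm 2}$.
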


\begin{proof} We first follow Hu et al~\cite{hu2008instrumental} framework to prove that $(\mathbf{v}_t, \mathbf{c}_t)$ is block-wise identifiable given sufficient observation. Sequentially, we prove that we require at least $5$ adjacent observed variables to achieve block-wise identifiability.

Given time series data with $T$ timesteps $X=\{\mathbf{x}_1,\cdots,\mathbf{x}_t,\cdots,\mathbf{x}_T\}$, $2$-layers of latent variables $V = \{\mathbf{v}_1, \cdots, \mathbf{v}_t, \cdots, \mathbf{v}_T\}$ and $C = \{\mathbf{c}_1, \cdots, \mathbf{c}_t, \cdots, \mathbf{c}_T\}$. To simplify the notation, we let $\mathbf{x}_{<t}$, $\mathbf{x}_{>t}$ and $\mathbf{z}_t$ be $\{\mathbf{x}_{t-2},\mathbf{x}_{t-1}\}$, $\{\mathbf{x}_{t+1},\mathbf{x}_{t+2}\}$ and $(\mathbf{v}_t,\mathbf{c}_t)$, respectively. Sequentially, according to the data generation process in Figure \ref{fig:intuition}(a), we have:
\begin{equation}
\begin{split}
\label{the1_1}
    P(\mathbf{x}_{<t}|\mathbf{x}_{t},\mathbf{z}_t) &= P(\mathbf{x}_{<t}|\mathbf{z}_t),  \\
    \quad P(\mathbf{x}_{>t}|\mathbf{x}_t, \mathbf{x}_{<t},\mathbf{z}_t) &= P(\mathbf{x}_{>t}|\mathbf{z}_t).
\end{split}
\end{equation}
Sequentially, the observed $P(\mathbf{x}_{t-1})$ and joint distribution $P(\mathbf{x}_{>t},\mathbf{x}_t,\mathbf{x}_{<t})$ directly indicates $P(\mathbf{x}_{>t}, \mathbf{x}_t|\mathbf{x}_{<t})$, and we have:
\begin{equation}
\label{the1_2}
\begin{split}
\begin{split}
    &P(\mathbf{x}_{>t},\mathbf{x}_t|\mathbf{x}_{<t})\\  
    &= \underbrace{\int_{\mathcal{Z}_t}P(\mathbf{x}_{>t},\mathbf{x}_t,\mathbf{z}_t|\mathbf{x}_{<t}) d\mathbf{z}_t}_{\text{Integration over $\mathcal{Z}_t$}} \\ 
    &= \underbrace{\int_{\mathcal{Z}_t}P(\mathbf{x}_{>t}|\mathbf{x}_t,\mathbf{z}_t,\mathbf{x}_{<t})P(\mathbf{x}_t,\mathbf{z}_t|\mathbf{x}_{<t})d\mathbf{z}_t}_{\text{Factorization of joint conditional probability}} \\
    &=\underbrace{\int_{\mathcal{Z}_t}P(\mathbf{x}_{>t}|\mathbf{z}_t)P(\mathbf{x}_t,\mathbf{z}_t|\mathbf{x}_{<t})d\mathbf{z}_t}_{\text{Conditional Independence}} \\
    &=\underbrace{\int_{\mathcal{Z}_t}P(\mathbf{x}_{>t}|\mathbf{z}_t)P(\mathbf{x}_{t}|\mathbf{z}_t,\mathbf{x}_{<t})P(\mathbf{z}_t|\mathbf{x}_{<t})d\mathbf{z}_t}_{\text{Bayes Law}} \\
    &=\int_{\mathcal{Z}_t}P(\mathbf{x}_{>t}|\mathbf{z}_t)P(\mathbf{x}_t|\mathbf{z}_t)P(\mathbf{z}_t|\mathbf{x}_{<t})d\mathbf{z}_t.
\end{split}
\end{split}
\end{equation}
We further incorporate the integration over $\mathcal{X}_{<t}$ as follows:
\begin{equation}
\begin{split}
    \label{the1_3}
&\int_{\mathcal{X}_{<t}}P(\mathbf{x}_{>t},\mathbf{x}_t|\mathbf{x}_{<t})P(\mathbf{x}_{<t})d\mathbf{x}_{<t} \\ &=\int_{\mathcal{X}_{<t}}\int_{\mathcal{Z}_t}P(\mathbf{x}_{>t}|\mathbf{z}_t)P(\mathbf{x}_t|\mathbf{z}_t)P(\mathbf{z}_t|\mathbf{x}_{<t})P(\mathbf{x}_{<t})d\mathbf{z}_t d\mathbf{x}_{<t}.
\end{split}
\end{equation}
According to the definition of linear operator, we have:
\begin{equation}
\label{the1_4}
\begin{split}
\int_{\mathcal{X}_{<t}}P(\mathbf{x}_{>t},\mathbf{x}_t|\mathbf{x}_{<t})P(\mathbf{x}_{<t})d\mathbf{x}_{<t}&=[L_{\mathbf{x}_{>t},\mathbf{x}_t|\mathbf{x}_{<t}}\circ P](\mathbf{x}_{<t}),\\
\int_{\mathcal{X}_{<t}}P(\mathbf{z}_{t}|\mathbf{x}_{<t})P(\mathbf{x}_{<t})d\mathbf{x}_{<t} &= [L_{\mathbf{z}_t|\mathbf{x}<t}\circ P](\mathbf{x}_{<t})\\
\int_{\mathcal{Z}_{t}} P(\mathbf{x}_{>t}|\mathbf{z}_{t})d\mathbf{z}_t &=L_{\mathbf{x}_{>t}|\mathbf{z}_t}.
\end{split}
\end{equation}

By combining Equation (\ref{the1_3}) and (\ref{the1_4}), we have:
\begin{equation}
    [L_{\mathbf{x}_{>t},\mathbf{x}_t|\mathbf{x}_{<t}}\circ P](\mathbf{x}_{<t})=[L_{\mathbf{x}_{>t}|\mathbf{z}_t}D_{\mathbf{x}_t|\mathbf{z}_t}L_{\mathbf{z}_t|\mathbf{x}<t}\circ P](\mathbf{x}_{<t}),
\end{equation}
which implies the operator equivalence:
\begin{equation}
\label{the1_5.5}
    L_{\mathbf{x}_{>t},\mathbf{x}_t|\mathbf{x}_{<t}} = L_{\mathbf{x}_{>t}|\mathbf{z}_t}D_{\mathbf{x}_t|\mathbf{z}_t}L_{\mathbf{z}_t|\mathbf{x}<t}.
\end{equation}
Sequentially, we further integrate out $\mathbf{x}_t$ and have:
\begin{equation}
    \int_{\mathcal{X}_t}  L_{\mathbf{x}_{>t},\mathbf{x}_t|\mathbf{x}_{<t}} d\mathbf{x}_t = \int_{\mathcal{X}_t}L_{\mathbf{x}_{>t}|\mathbf{z}_t}D_{\mathbf{x}_t|\mathbf{z}_t}L_{\mathbf{z}_t|\mathbf{x}<t} d\mathbf{x}_t,
\end{equation}
and it results in:
\begin{equation}
\label{the1_6}
    L_{\mathbf{x}_{>t}|\mathbf{x}_{<t}} = L_{\mathbf{x}_{>t}|\mathbf{z}_t} L_{\mathbf{z}_t|\mathbf{x}<t}.
\end{equation}
According to assumption A2, the linear operator $L_{\mathbf{x}_{>t}|\mathbf{z}_t}$ is injective, Equation (\ref{the1_6}) can be rewritten as:
\begin{equation}
\label{the1_7}
    L_{\mathbf{x}>t|\mathbf{z}_t}^{-1}L_{\mathbf{x}_{>t}|\mathbf{x}_{<t}} = L_{\mathbf{z}_t|\mathbf{x}<t}.
\end{equation}
By combining Equation (\ref{the1_5.5}) and (\ref{the1_7}), we have
\begin{equation}
    L_{\mathbf{x}_{>t},\mathbf{x}_t|\mathbf{x}<t} = L_{\mathbf{x}>t|\mathbf{z}_t}D_{\mathbf{x}_t|\mathbf{z}_t}L_{\mathbf{x}>t|\mathbf{z}_t}^{-1}L_{\mathbf{x}>t|\mathbf{x}<t}.
\end{equation}
By leveraging Lemma \ref{label:lemma}, if $L_{\mathbf{x}_{<t}|\mathbf{x}_{>t}}$ is injective, then $L_{\mathbf{x}_{>t}|\mathbf{x}_{<t}}^{-1}$ exists. Therefore, we have:
\begin{equation} \label{equ:trueLDL}
    L_{\mathbf{x}_{>t},\mathbf{x}_t|\mathbf{x}_{<t}} L_{\mathbf{x}>t|\mathbf{x}<t}^{-1} = L_{\mathbf{x}>t|\mathbf{z}_t}D_{\mathbf{x}_t|\mathbf{z}_t}L_{\mathbf{x}>t|\mathbf{z}_t}^{-1}.
\end{equation}
Then we can leverage assumption A3 and the linear operator is bounded. Consequently, $L_{\mathbf{x}_{>t},\mathbf{x}_t|\mathbf{x}_{<t}} L_{\mathbf{x}>t|\mathbf{x}<t}^{-1}$ is also bounded, which satisfies the condition of Theorem \ref{thm:dunford}, and hence the the operator $L_{\mathbf{x}>t|\mathbf{z}_t}D_{\mathbf{x}_t|\mathbf{z}_t}L_{\mathbf{x}>t|\mathbf{z}_t}^{-1}$ have a unique spectral decomposition, where $L_{\mathbf{x}>t|\mathbf{z}_t}$ and $D_{\mathbf{x}_t|\mathbf{z}_t}$ correspond to eigenfunctions and
eigenvalues, respectively.

Since both the marginal and conditional distributions of the observed variables are matched, the true model and the estimated model yield the same distribution over the observed variables. Therefore, we also have:
\begin{equation}
\label{equ:true_estimated}
    L_{\mathbf{x}_{>t},\mathbf{x}_t|\mathbf{x}_{<t}} L^{-1}_{\mathbf{x}>t|\mathbf{x}<t} = L_{\hat{\mathbf{x}}_{>t},\hat{\mathbf{x}}_t|\hat{\mathbf{x}}_{<t}} L^{-1}_{\hat{\mathbf{x}}>t|\hat{\mathbf{x}}<t},  
\end{equation}
where the L.H.S corresponds to the true model and the R.H.S corresponds to the estimated model. Moreover, $L_{\hat{\mathbf{x}}_{>t},\hat{\mathbf{x}}_t|\hat{\mathbf{x}}_{<t}} L^{-1}_{\hat{\mathbf{x}}>t|\hat{\mathbf{x}}<t}$ also have the unique decomposition, so the L.H.S of the Equation (\ref{equ:true_estimated}) can be written as:
\begin{equation} \label{equ:estLDL}
    L_{\mathbf{x}_{>t},\mathbf{x}_t|\mathbf{x}_{<t}} L^{-1}_{\mathbf{x}>t|\mathbf{x}<t} = L_{\hat{\mathbf{x}}_{>t}|\hat{\mathbf{z}}_{t}}D_{\hat{\mathbf{x}}_t|\hat{\mathbf{z}}_t}L^{-1}_{\hat{\mathbf{x}}_{>t}|\hat{\mathbf{z}}_t},
\end{equation}
Integrating Equation (\ref{equ:trueLDL}) and Equation (\ref{equ:estLDL}), and noting that their L.H.S. are identical, it follows that they share the same spectral decomposition. This yields
\begin{equation}
    L_{\mathbf{x}_{>t}|\mathbf{z}_t} = C L_{\hat{\mathbf{x}}_{>t}|\hat{\mathbf{z}}_t} P, \quad D_{\mathbf{x}_t|\mathbf{z}_t} = P^{-1} D_{\hat{\mathbf{x}}_t|\hat{\mathbf{z}}_t} P,
\end{equation}
where $C$ is a scalar accounting for scaling indeterminacy and $P$ is a permutation on the order of elements in $D_{\hat{\mathbf{x}}_t|\hat{\mathbf{z}}_t}$, as discussed in~\citep{dunford1988linear}. These forms of indeterminacy are analogous to those in eigendecomposition, which can be viewed as a finite-dimensional special case. \textcolor{black}{P is a mapping from distribution to distribution}

Since the normalizing condition 
\begin{equation}
    \int_{\hat{\mathcal{X}}_{t+1}} p_{\hat{\mathbf{x}}_t|\hat{\mathbf{z}}_t} \, d\hat{\mathbf{x}}_t = 1
\end{equation}
must hold for every $\hat{\mathbf{z}}_t$, one only solution is to set $C=1$. 

Hence, $D_{\hat{\mathbf{x}}_t|\hat{\mathbf{z}}_t}$ and $D_{\mathbf{x}_t|\mathbf{z}_t}$ are identical up to a permutation on their repsective elements. We use unordered sets to express this equivalence:
\begin{equation}
\{ p(\mathbf{x}_t \mid \mathbf{z}_t) \} = \{ p(\mathbf{x}_t \mid \hat{\mathbf{z}}_t) \}, \quad \text{for all } \mathbf{z}_t, \hat{\mathbf{z}}_t.
\end{equation}
Due to the set being unordered, the only way to match the R.H.S. with the L.H.S. in a consistent order is to exchange the conditioning variables, that is, 
\begin{equation}
\begin{split}
    &\Big\{ p(\mathbf{x}_t|\mathbf{z}_t^{(1)}),p(\mathbf{x}_t|\mathbf{z}_t^{(2)}),\cdots \Big\} \\
    &= \Big\{ p(\mathbf{x}_t|\hat{\mathbf{z}}_t^{(\pi(1))}),p(\mathbf{x}_t|\hat{\mathbf{z}}_t^{(\pi(2))}),\cdots \Big\},
\end{split}
\end{equation}
where superscript $(\cdot)$ denotes the index of a conditioning variable, and $\pi$ is reindexing the conditioning variables. We use a relabeling map \(h\) to represent its corresponding value mapping:
\begin{equation}
\label{equ:D_equ3}
    p(\mathbf{x}_t|\mathbf{z}_t) = p(\mathbf{x}_t|h(\hat{\mathbf{z}}_t)), \text{ for all } \mathbf{z}_t,\hat{\mathbf{z}}_t
\end{equation}

Since $K_{\hat{\mathbf{z}}_t,\mathbf{z}_t}, L^{-1}_{{\mathbf{x}}_{>t}|\hat{\mathbf{z}}_t}$, and $L_{\mathbf{z}_t|\mathbf{x}_{>t}}$ are continuous, $h$ is continuous and differentiable. Moreover, by leveraging assumption A3, different values of $\mathbf{z}$, i.e., $\mathbf{z}_t^{(1)}, \mathbf{z}_t^{(2)}$ imply $p(\mathbf{x}_t|\mathbf{z}_t^{(1)})\neq p(\mathbf{x}_t|\mathbf{z}_t^{(2)})$. So we can construct a function $F:\mathcal{Z}\rightarrow p(\mathbf{x}_t|\mathbf{z}_t)$, and we have:
\begin{equation}
    \mathbf{z}_t^{(1)} \neq \mathbf{z}_t^{(2)} \longrightarrow F(\mathbf{z}_t^{(1)}) \neq F(\mathbf{z}_t^{(2)}),
\end{equation}
implying that $F$ is injective. Moreover, by using Equation (\ref{equ:D_equ3}), we have $F(\mathbf{z}_t)=F(h(\hat{\mathbf{z}}_t))$, which implies $\mathbf{z}_t=h(\hat{\mathbf{z}}_t)$.

The aforementioned result leverage $\mathbf{x}_{<t}, \mathbf{x}_t$, and $\mathbf{x}_{>t}$ as three different measurement of $\mathbf{z}_t$, where $|\mathbf{x}_{<t}| \mathcal{g} |\mathbf{z}_t|$, $|\mathbf{x}_{>t}| \mathcal{g} |\mathbf{z}_t|$ and $|\mathbf{x}_t| < |\mathbf{z}_t|$. It may imply that when the $\mathbf{x}_t$ cannot provide enough information to recover $\mathbf{z}_t$, we can seek more information from $\mathbf{x}_{<t}$ and $\mathbf{x}_{>t}$. 

Sequentially, we further prove that when the observed and $2$-layer latent variables follow the data generation process in Equation (\ref{equ:x_gen}) and (\ref{equ:visual_gen}), we require at least $5$ adjacent observed variables, i.e., $\{\mathbf{x}_{t-2},\mathbf{x}_{t-1},\mathbf{x}_t,\mathbf{x}_{t+1},\mathbf{x}_{t+2}\}$ to make $(\mathbf{v}_t,\mathbf{c}_t)$ block-wise identifiable. We prove it by contradiction as follows.

Suppose we have $4$ adjacent observations, which can be divided into two cases: 1) $\{\mathbf{x}_{t-1},\mathbf{x}_t,\mathbf{x}_{t+1},\mathbf{x}_{t+2}\}$ and $\{\mathbf{x}_{t-2},\mathbf{x}_{t-1},\mathbf{x}_t,\mathbf{x}_{t+1}\}$. In the first case, suppose the dimension of $\mathbf{x}_t$ and that of latent variables $\mathbf{z}_t=(\mathbf{v}_t, \mathbf{c}_t)$ are $2n$, the dimension of $\mathbf{x}_{t-1}$ is $n$ and the dimension of $\mathbf{z}_t$ is $2n$, conflicting with the assumption that $L_{\mathbf{x}_{t+1},\mathbf{x}_{t+2}|\mathbf{z}_t}$ is injective. In the second case, the dimensions of $\mathbf{x}_{t-2},\mathbf{x}_{t-1}$ and $\mathbf{x}_{t+1}$ are $2n$ and $n$, respectively, conflicting with the assumption that $L_{\mathbf{x}_{t-2},\mathbf{x}_{t-1}|\mathbf{x}_{t+1}}$ is injective. As a result, we require at least $5$ adjacent observations, i.e., $\{\mathbf{x}_{t-2},\mathbf{x}_{t-1},\mathbf{x}_t,\mathbf{x}_{t+1},\mathbf{x}_{t+2}\}$ to make $(\mathbf{v}_t,\mathbf{c}_t)$ block-wise identifiable. In fact, as long as the total dimension of the observed variable $\mathbf{x}_{t-\tau},\cdots,\mathbf{x}_{t-1}$ is greater than or equal to the total dimension of the latent variable $(\mathbf{v}_t,\mathbf{c}_t)$, the mapping from the latent variable to the observed variable is injective. Then the latent variable $(\mathbf{v}_t,\mathbf{c}_t)$ is identifiable.
\end{proof}

\subsection{Proof of Block-wise Identifiability of Latent Action variables.}
In this section, we present the proof for Theorem \ref{theorem:2}. We first give a general identifiability theory (i.e., Theorem \ref{theorem:generative_identifiability}) for the generating process in Figure \ref{fig:intuition}(b) and then make the connection to the proof of Theorem \ref{theorem:2}.

\begin{theorem}
\label{theorem:generative_identifiability}
The generating process in Figure is defined as follows:
\begin{align}
\label{eq:general_invertible_process}
    [\mathbf{v}_{t-1}, \mathbf{v}_{t+1}] &= f^v(\mathbf{c}_{t}, \mathbf{v}_{t-2},\mathbf{v}_{t},\epsilon^v_{t-1},\epsilon^v_{t+1}), \\
    \mathbf{v}_{t-1} & = f^v_{t-1} (\mathbf{c}_{t},\mathbf{v}_{t-2},\epsilon^v_{t-1}), \\
    \mathbf{v}_{t+1} & = f^v_{t+1} (\mathbf{c}_{t},\mathbf{v}_{t},\epsilon^v_{t+1}),
\end{align}
where $ \mathbf{c}_{t}\in\mathbb{R}^{n_c} $, $ \epsilon^v_{t-1}\in\mathbb{R}^{n_{\epsilon}} $, and $ \epsilon^v_{t+1} \in\mathbb{R}^{n_{\epsilon}}$. 
    Both $f^{v}_{t-1}$ and $f^{v}_{t+1}$ are smooth and have non-singular Jacobian matrices almost anywhere, and $f$ is invertible.

If $\hat{f}^v_{t-1}$ and $\hat{f}^v_{t+1}$ assume the generating process of the true model $(f^v_{t-1}, f^v_{t+1})$ and match the joint distribution $p_{\mathbf{v}_{t-1}, \mathbf{v}_{t+1}}$, then $\mathbf{c}_{t}$ are block-wise identifiable.
\end{theorem}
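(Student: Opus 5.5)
The plan follows the block-identification argument of \cite{von2021self} and \cite{kong2022partial}, adapted to the generating process in Equation (\ref{eq:general_invertible_process}). The idea is to show that any estimated latent block that reproduces $p_{\mathbf{v}_{t-1},\mathbf{v}_{t+1}}$ must encode exactly the information in $\mathbf{c}_t$, which is the only variable shared by $\mathbf{v}_{t-1}$ and $\mathbf{v}_{t+1}$.

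\textbf{Step 1: construct an invertible map between estimated and true variables.} Write $\mathbf{s}=(\mathbf{c}_t,\mathbf{v}_{t-2},\epsilon^v_{t-1},\mathbf{v}_t,\epsilon^v_{t+1})$, and $\hat{\mathbf{s}}$ for its estimated counterpart. Both the true and estimated generators $f^v$ and $\hat f^v$ are invertible and smooth with a.e.\ non-singular Jacobians (A4). Matching the joint distribution of $(\mathbf{v}_{t-1},\mathbf{v}_{t+1})$ therefore gives a map $h=(f^v)^{-1}\circ\hat f^v$ with $\mathbf{s}=h(\hat{\mathbf{s}})$. This map is a diffeomorphism almost everywhere, and it pushes $p_{\hat{\mathbf{s}}}$ forward to $p_{\mathbf{s}}$. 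We may use Lemma \ref{theorem:1} to anchor $\mathbf{v}_{t-2}$ and $\mathbf{v}_t$ as block-identified. This lets us treat $\mathbf{v}_{t-2}$ and $\mathbf{v}_t$ as known up to an invertible map, so the remaining indeterminacy lives only in $(\mathbf{c}_t,\epsilon^v_{t-1},\epsilon^v_{t+1})$.

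\textbf{Step 2: exploit the structure of the two components.} We use that $\mathbf{v}_{t-1}$ depends only on $(\mathbf{c}_t,\mathbf{v}_{t-2},\epsilon^v_{t-1})$ and $\mathbf{v}_{t+1}$ only on $(\mathbf{c}_t,\mathbf{v}_t,\epsilon^v_{t+1})$, and that the estimated model has the same structure. Equating $f^v_{t-1}(\mathbf{c}_t,\mathbf{v}_{t-2},\epsilon^v_{t-1})=\hat f^v_{t-1}(\hat{\mathbf{c}}_t,\hat{\mathbf{v}}_{t-2},\hat\epsilon^v_{t-1})$ and differentiating with respect to $\hat\epsilon^v_{t+1}$ and $\hat{\mathbf{v}}_t$ makes the right-hand side vanish. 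This gives a linear system in the partial derivatives $\partial\mathbf{c}_t/\partial\hat\epsilon^v_{t+1}$, $\partial\epsilon^v_{t-1}/\partial\hat\epsilon^v_{t+1}$, and so on. The symmetric system comes from the $\mathbf{v}_{t+1}$ equation differentiated in $\hat\epsilon^v_{t-1}$ and $\hat{\mathbf{v}}_{t-2}$.

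\textbf{Step 3: conclude that $\mathbf{c}_t$ depends only on $\hat{\mathbf{c}}_t$.} Since $\mathbf{c}_t$ enters both equations, the two systems combined force $\partial\mathbf{c}_t/\partial\hat\epsilon^v_{t\pm1}=0$ and $\partial\mathbf{c}_t/\partial\hat{\mathbf{v}}_{t-2}=\partial\mathbf{c}_t/\partial\hat{\mathbf{v}}_t=0$. This step uses the non-singular Jacobians and the assumption that each action block has at least two visual descendants. The conditional independence of $\epsilon^v_{t-1}$, $\mathbf{c}_t$, and $\epsilon^v_{t+1}$ serves as a backup route: a Darmois-type argument, as in \cite{von2021self}, can rule out any residual dependence. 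Hence $\mathbf{c}_t=h_c(\hat{\mathbf{c}}_t)$. Invertibility of $h_c$ then follows from invertibility of $h$, via a block-triangular Jacobian whose determinant is nonzero, together with $\dim\hat{\mathbf{c}}_t=n_c$.

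\textbf{Main obstacle.} Step 3 is the delicate step. Differentiating one generator alone only shows that the \emph{combination} $(\mathbf{c}_t,\epsilon^v_{t-1})$ is unaffected by the other side's variables, not that $\mathbf{c}_t$ itself is. The shared-variable argument is needed to separate them: $\mathbf{c}_t$ must be simultaneously unaffected by $\hat\epsilon^v_{t+1}$ and by $\hat\epsilon^v_{t-1}$. I would first try to show that the column space of $\partial(\mathbf{c}_t,\epsilon^v_{t-1})/\partial\hat{\mathbf{s}}$ and that of $\partial(\mathbf{c}_t,\epsilon^v_{t+1})/\partial\hat{\mathbf{s}}$ intersect exactly in the $\mathbf{c}_t$ directions. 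If that fails, I would instead use a Kong et al.-style argument that checks the second-order cross-derivatives of $\log p$.
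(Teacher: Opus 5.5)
Your Steps 2 and 3 are the paper's argument run in the mirror direction, with the roles of true and estimated variables swapped.

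\textbf{How the two directions compare.} The paper sets $h=\hat f^{-1}\circ f$ (true $\to$ estimated). It then differentiates $f_{t-1}(\mathbf{c}_t,\mathbf{v}_{t-2},\epsilon_{t-1})=\hat f_{t-1}(h_{(\mathbf{c}_t,\mathbf{v}_{t-2},\epsilon_{t-1})}(\cdot))$ with respect to the \emph{true} $\epsilon_{t+1}$ and $\mathbf{v}_t$. Invertibility of $\mathbf{J}_{\hat f_{t-1}}$ then shows that the estimated block $(\hat{\mathbf{c}}_t,\hat{\mathbf{v}}_{t-2},\hat\epsilon_{t-1})$ does not depend on them. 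The same is done with $f_{t+1}$, and $\hat{\mathbf{c}}_t=h_{\mathbf{c}}(\mathbf{c}_t)$ follows because $\hat{\mathbf{c}}_t$ lies in both blocks. You differentiate with respect to the estimated variables and use A4 on the true $f_{t\pm1}$. This lands directly on the form $\mathbf{c}_t=h_c(\hat{\mathbf{c}}_t)$ that the paper's definition asks for.

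The two directions complement each other. Your block-triangular determinant only makes $h_c$ a local diffeomorphism. Combining your conclusion with the paper's gives maps $\mathbf{c}_t\mapsto\hat{\mathbf{c}}_t$ and $\hat{\mathbf{c}}_t\mapsto\mathbf{c}_t$ that are mutually inverse on the supports. That yields genuine invertibility of $h_c$, which the paper never argues.

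\textbf{Corrections.} First, the appeal to Lemma 1 in Step 1 is unjustified. Lemma 1 identifies only the pair $(\mathbf{v}_t,\mathbf{c}_t)$ up to a joint invertible map, not $\mathbf{v}_t$ or $\mathbf{v}_{t-2}$ separately from $\mathbf{c}_t$. Since Step 2 differentiates with respect to $\hat{\mathbf{v}}_t$ and $\hat{\mathbf{v}}_{t-2}$ anyway, simply drop it.

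Second, your `main obstacle' is misdiagnosed. The two linear systems constrain different columns of $\mathbf{J}_h$: derivatives with respect to $(\hat{\mathbf{v}}_t,\hat\epsilon_{t+1})$ in one, and $(\hat{\mathbf{v}}_{t-2},\hat\epsilon_{t-1})$ in the other. Combining them therefore cannot make up for missing rank; each must be solved on its own. If $\mathbf{J}_{f_{t-1}}$ has trivial kernel, the first system already forces every coordinate of $(\mathbf{c}_t,\mathbf{v}_{t-2},\epsilon_{t-1})$, including $\mathbf{c}_t$ itself, to have zero derivative. Step 3 is then a one-line consequence of $\mathbf{c}_t$ appearing in both blocks. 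No column-space, Darmois, or second-order $\log p$ argument is needed, and the paper uses none.

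\textbf{The real weak point, shared with the paper.} What your worry is actually detecting is dimensional. $\mathbf{J}_{f_{t-1}}$ is $n_v\times(n_c+n_v+n_\epsilon)$, so it can have trivial kernel only if $n_c+n_\epsilon=0$; otherwise only a kernel condition survives. The same count means $f^v:\mathbb{R}^{n_c+2n_v+2n_\epsilon}\to\mathbb{R}^{2n_v}$ cannot be a diffeomorphism unless $n_c+2n_\epsilon=0$. The paper does not resolve this: it simply asserts $\det\mathbf{J}_{\hat f_{t-1}}\neq 0$ and treats the system as square. So your Steps 2 and 3 are exactly as strong as the paper's proof. Your sketched fallbacks would need extra assumptions (e.g.\ sufficient variability) that the statement does not supply.
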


\begin{proof}
To simplify the notation, we use $f$, $f_{t-1}$, $f_{t+1}$, $\epsilon_{t-1}$ and $\epsilon_{t+1}$ represent $f^v$, $f^v_{t-1}$, $f^v_{t+1}$, $\epsilon^v_{t-1}$ and $\epsilon^v_{t+1}$, respectively.

For $(\mathbf{v}_{t-1}, \mathbf{v}_{t+1}) \sim p_{\mathbf{v}_{t-1}, \mathbf{v}_{t+1}} $, because of the matched joint distribution, we have the following relations between the true variables $ (\mathbf{c}_{t},\mathbf{v}_{t-2},\mathbf{v}_{t},\epsilon_{t-1},\epsilon_{t+1}) $ and the estimated ones $ (\hat{\mathbf{c}}_{t},\hat{\mathbf{v}}_{t-2},\hat{\mathbf{v}}_{t},\hat{\epsilon}_{t-1}, \hat{\epsilon}_{t+1}) $:
\begin{align}
    &\mathbf{v}_{t-1} = f_{t-1} (\mathbf{c}_{t},\mathbf{v}_{t-2}, \epsilon_{t-1}) = \hat{f}_{t-1} ( \hat{\mathbf{c}}_{t}, \hat{\mathbf{v}}_{t-2}, \hat{\epsilon}_{t-1} ), \label{eq:vt-1_gen}	\\
    &\mathbf{v}_{t+1} = f_{t+1} (\mathbf{c}_{t},\mathbf{v}_{t}, \epsilon_{t+1}) = \hat{f}_{t+1} ( \hat{\mathbf{c}}_{t},\hat{\mathbf{v}}_{t}, \hat{\epsilon}_{t+1} ) , \label{eq:vt+1_gen} \\
    \begin{split}
        &(\hat{\mathbf{c}}_{t},\hat{\mathbf{v}}_{t-2}, \hat{\mathbf{v}}_{t}, \hat{\epsilon}_{t-1}, \hat{\epsilon}_{t+1}) 
    \\&=\hat{f}^{-1} (\mathbf{v}_{t-1}, \mathbf{v}_{t+1}) 
    \\&= \hat{f}^{-1} ( f(\mathbf{c}_{t}, \mathbf{v}_{t-2}, \mathbf{v}_{t},\epsilon_{t-1},\epsilon_{t+1}) )
    \\&= h(\mathbf{c}_{t},\mathbf{v}_{t-2}, \mathbf{v}_{t}, \epsilon_{t-1},\epsilon_{t+1}), 
    \label{eq:h_func} 
    \end{split}
\end{align}
where $\hat{f}_{t-1}$, $\hat{f}_{t+1}$ are the estimated invertible transition function and $h:= \hat{f}^{-1} \circ f$ denotes a smooth and invertible function that transforms the true variables $\mathbf{c}_{t},\mathbf{v}_{t-2},\mathbf{v}_{t}, \epsilon_{t-1},\epsilon_{t+1}$ to the estimated ones $\hat{\mathbf{c}}_{t},\hat{\mathbf{v}}_{t-2}, \hat{\mathbf{v}}_{t}, \hat{\epsilon}_{t-1}, \hat{\epsilon}_{t+1}$.

By combining Equation (\ref{eq:h_func})  into Equation (\ref{eq:vt-1_gen}) yields the following:

\begin{align}  
    \begin{split}
        &f_{t-1} ( \mathbf{c}_{t}, \mathbf{v}_{t-2}, \epsilon_{t-1} ) 
        \\&= \hat{f}_{t-1} ( h_{\mathbf{c}_{t},\mathbf{v}_{t-2}, \epsilon_{t-1}} (\mathbf{c}_{t},\mathbf{v}_{t-2},\mathbf{v}_{t},\epsilon_{t-1},\epsilon_{t+1}) ).
        \label{eq:vt-1_esimated_to_true}
    \end{split}
\end{align}

For $ i \in \{1, \dots, n_v \} $ and $ j \in \{ 1,\dots, n_{\epsilon}\}$, we take partial derivative of the $ i $-th dimension of $\mathbf{v}_{t}$ on both sides of Equation (\ref{eq:vt-1_esimated_to_true}) w.r.t. $ \epsilon_{t+1, j} $ and have:

\begin{align}
    \begin{split}
                0 &= \frac{ \partial f_{t-1, i} ( \mathbf{c}_{t},\mathbf{v}_{t-2}, \epsilon_{t-1} )}{ \partial \epsilon_{t+1,j} } 
                \\&= \frac{ \partial \hat{f}_{t-1, i} (h_{\mathbf{c}_{t},\mathbf{v}_{t-2}, \epsilon_{t-1}} (\mathbf{c}_{t},\mathbf{v}_{t-2}, \mathbf{v}_{t}, \epsilon_{t-1},\epsilon_{t+1}) ) }{ \partial \epsilon_{t+1,j} }.
    \end{split}
    \label{eq:first_partial_de}
\end{align}
The equation equals zero because there is no $ \epsilon_{t+1, j} $ in the left-hand side of the equation.
Expanding the derivative on the right-hand side gives:

\begin{equation}
     \sum_{l \in \{ 1, \dots, n_c +n_v+ n_{\epsilon} \} } \frac{ \partial \hat{f}_{t-1, i} }{ \partial h_{(\mathbf{c}_{t},\mathbf{v}_{t-2}, \epsilon_{t-1}), l} } \cdot \frac{ \partial h_{(\mathbf{c}_{t}, \mathbf{v}_{t-2},\epsilon_{t-1}), l} }{ \partial \epsilon_{t+1,j} } = 0.
    \label{eq:one_equation_1}
\end{equation}

Since $\hat{f}_{t-1}$ is invertible, the determinant of $\textbf{J}_{\hat{f}_{t-1}}$ does not equal to 0, meaning that for $n_c+n_v+n_{\epsilon}$ different values of $\hat{f}_{t-1,i}$, each vector $ [ \frac{ \partial \hat{f}_{t-1, i} }{ \partial h_{(\mathbf{c}_{t},\mathbf{v}_{t-2}, \epsilon_{t-1}), 1} }, \dots, \frac{ \partial \hat{f}_{t-1, i} }{ \partial h_{(\mathbf{c}_{t},\mathbf{v}_{t-2}, \epsilon_{t-1}), n_c+n_v+n_{\epsilon}} }]$ are linearly independent. Therefore, the $(n_c+n_v+n_{\epsilon})\times(n_c+n_v+n_{\epsilon})$ linear system is invertible and has the unique solution as follows:
\begin{equation}
\label{eq:linear_dep_1}
    \frac{ \partial h_{(\mathbf{c}_{t},\mathbf{v}_{t-2}, \epsilon_{t-1}), l} }{ \partial \epsilon_{t+1,j} } ( \mathbf{c}_{t},\mathbf{v}_{t-2},\mathbf{v}_{t},\epsilon_{t-1},\epsilon_{t+1} ) = 0.
\end{equation}

According to Equation (\ref{eq:linear_dep_1}), for any $l \in \{1, \dots, n_c + n_v + n_{\epsilon} \}$ and $j \in \{1, \dots, n_{\epsilon} \}$, $h_{(\mathbf{c}_{t}, \mathbf{v}_{t-2},\epsilon_{t-1}), l}( \mathbf{c}_{t},\mathbf{v}_{t-2},\mathbf{v}_{t},\epsilon_{t-1},\epsilon_{t+1} )$ does not depend on $\epsilon_{t+1,j}$. In other word, $(\mathbf{c}_{t}, \mathbf{v}_{t-2}, \epsilon_{t-1})$, does not depend on $ \epsilon_{t+1}$.

For $ i \in \{1, \dots, n_v \} $ and $ k \in \{ 1,\dots, n_v \}$, we take partial derivative of the $ i $-th dimension of $\mathbf{v}_{t}$ on both sides of Equation (\ref{eq:vt-1_esimated_to_true}) w.r.t. $ \mathbf{v}_{t, k} $ and have:
\begin{align}
    \begin{split}
                0 &= \frac{ \partial f_{t-1, i} ( \mathbf{c}_{t},\mathbf{v}_{t-2}, \epsilon_{t-1} )}{ \partial \mathbf{v}_{t,k} } 
                \\&= \frac{ \partial \hat{f}_{t-1, i} (h_{\mathbf{c}_{t},\mathbf{v}_{t-2}, \epsilon_{t-1}} (\mathbf{c}_{t},\mathbf{v}_{t-2}, \mathbf{v}_{t}, \epsilon_{t-1},\epsilon_{t+1}) ) }{ \partial \mathbf{v}_{t,k}  }
                \\&=\sum_{l \in \{ 1, \dots, n_c +n_v+ n_{\epsilon} \} } \frac{ \partial \hat{f}_{t-1, i} }{ \partial h_{(\mathbf{c}_{t},\mathbf{v}_{t-2}, \epsilon_{t-1}), l} } \cdot \frac{ \partial h_{(\mathbf{c}_{t}, \mathbf{v}_{t-2},\epsilon_{t-1}), l} }{ \partial \mathbf{v}_{t,k} } = 0.
    \end{split}
    \label{eq:first_partial_de}
\end{align}
That means that for $n_c+n_v+n_{\epsilon}$ different values of $\hat{f}_{t-1,i}$, each vector $ [ \frac{ \partial \hat{f}_{t-1, i} }{ \partial h_{(\mathbf{c}_{t},\mathbf{v}_{t-2}, \epsilon_{t-1}), 1} }, \dots, \frac{ \partial \hat{f}_{t-1, i} }{ \partial h_{(\mathbf{c}_{t},\mathbf{v}_{t-2}, \epsilon_{t-1}), n_c+n_v+n_{\epsilon}} }]$ are linearly independent. Therefore, the $(n_c+n_v+n_{\epsilon})\times(n_c+n_v+n_{\epsilon})$ linear system is invertible. In other word, $(\mathbf{c}_{t}, \mathbf{v}_{t-2}, \epsilon_{t-1})$, does not depend on $ \mathbf{v}_{t}$.

Similarly, by combining Equation (\ref{eq:h_func}) and (\ref{eq:vt+1_gen}), we have:
\begin{align}
    \begin{split}
        \label{eq:vt+1_esimated_to_true}
        &f_{t+1} ( \mathbf{c}_{t},\mathbf{v}_{t}, \epsilon_{t+1} ) 
        \\&= \hat{f}_{t+1} ( h_{\mathbf{c}_{t}, \mathbf{v}_{t},\epsilon_{t+1}} (\mathbf{c}_{t},\mathbf{v}_{t-2},\mathbf{v}_{t},\epsilon_{t-1},\epsilon_{t+1}) ).
    \end{split}
\end{align}

For $ i \in \{1, \dots, n_c \} $ and $ j \in \{ 1,\dots, n_{\epsilon}\}$, we take partial derivative of the $ i $-th dimension of $\mathbf{c}_{t}$ on both sides of Equation (\ref{eq:vt+1_esimated_to_true}) w.r.t. $ \epsilon_{t-1, j} $ and have:

\begin{equation}
\begin{split}
     0 =& \frac{ \partial f_{t+1, i} ( \mathbf{c}_{t},\mathbf{v}_{t}, \epsilon_{t+1} )}{ \partial \epsilon_{t-1,j} } 
     \\ =& \frac{ \partial \hat{f}_{t+1, i} (h_{\mathbf{c}_{t}, \mathbf{v}_{t},\epsilon_{t+1}} (\mathbf{c}_{t},\mathbf{v}_{t-2},\mathbf{v}_{t},\epsilon_{t-1},\epsilon_{t+1}) ) }{ \partial \epsilon_{t-1,j} } \\
     =&\sum_{k \in \{ 1, \dots, n_c+ n_v + n_{\epsilon} \} } \frac{ \partial \hat{f}_{t+1, i} }{ \partial h_{(\mathbf{c}_{t},\mathbf{v}_{t}, \epsilon_{t+1}), k} } \cdot \frac{ \partial h_{(\mathbf{c}_{t},\mathbf{v}_{t}, \epsilon_{t+1}), k} }{ \partial \epsilon_{t-1,j} }. 
\end{split}
\end{equation}

Since $\hat{f}_{t+1}$ is invertible, for $n_c+n_v+n_{\epsilon}$ different values of $\hat{f}_{t+1,i}$, each vector $ [ \frac{ \partial \hat{f}_{t+1, i} }{ \partial h_{(\mathbf{c}_{t}, \mathbf{v}_{t},\epsilon_{t+1}), 1} }, \dots, \frac{ \partial \hat{f}_{t+1, i} }{ \partial h_{(\mathbf{c}_{t},\mathbf{v}_{t}, \epsilon_{t+1}), n_c+n_v+n_{\epsilon}} }]$ are linearly independent. Therefore, the $(n_c+n_v+n_{\epsilon})\times(n_c+n_v+n_{\epsilon})$ linear system is invertible and has the unique solution as follows:
\begin{equation}
\label{eq:linear_dep_2}
    \frac{ \partial h_{(\mathbf{c}_{t},\mathbf{v}_{t}, \epsilon_{t+1}), k} }{ \partial \epsilon_{t-1,j} } ( \mathbf{c}_{t},\mathbf{v}_{t-2},\mathbf{v}_{t},\epsilon_{t-1},\epsilon_{t+1} ) = 0,
\end{equation}
meaning that $(\mathbf{c}_{t}, \mathbf{v}_{t},\epsilon_{t+1})$  does not depend on $ \epsilon_{t-1}$.

For $ i \in \{1, \dots, n_c \} $ and $ k \in \{ 1,\dots, n_v \}$, we take partial derivative of the $ i $-th dimension of $\mathbf{v}_{t}$ on both sides of Equation (\ref{eq:vt+1_esimated_to_true}) w.r.t. $ \mathbf{v}_{t-2, k} $ and have:
\begin{equation}
\begin{split}
     0 =& \frac{ \partial f_{t+1, i} ( \mathbf{c}_{t},\mathbf{v}_{t}, \epsilon_{t+1} )}{ \partial \mathbf{v}_{t-2,k} } 
     \\ =& \frac{ \partial \hat{f}_{t+1, i} (h_{\mathbf{c}_{t}, \mathbf{v}_{t},\epsilon_{t+1}} (\mathbf{c}_{t},\mathbf{v}_{t-2},\mathbf{v}_{t},\epsilon_{t-1},\epsilon_{t+1}) ) }{ \partial \mathbf{v}_{t-2,k} } \\
     =&\sum_{l \in \{ 1, \dots, n_c+ n_v + n_{\epsilon} \} } \frac{ \partial \hat{f}_{t+1, i} }{ \partial h_{(\mathbf{c}_{t},\mathbf{v}_{t}, \epsilon_{t+1}), l} } \cdot \frac{ \partial h_{(\mathbf{c}_{t},\mathbf{v}_{t}, \epsilon_{t+1}), l} }{ \partial \mathbf{v}_{t-2,k} },
\end{split}
\end{equation}
meaning that $(\mathbf{c}_{t}, \mathbf{v}_{t},\epsilon_{t+1})$  does not depend on $ \mathbf{v}_{t-2}$.
According to Equation (\ref{eq:h_func}), we have $(\hat{\mathbf{c}}_t,\hat{\mathbf{v}}_{t-2},\hat{\mathbf{v}}_{t}, \hat{\epsilon}_{t-1}, \hat{\epsilon}_{t+1})=h_\mathbf{c}(\mathbf{c}_t,\mathbf{v}_{t-2}, \mathbf{v}_{t}, \epsilon_{t-1}, \epsilon_{t+1})$. By using the fact that $(\mathbf{c}_{t}, \mathbf{v}_{t-2}, \epsilon_{t-1})$  does not depend on $ \epsilon_{t+1}$ and $\mathbf{v}_{t}$,  $(\mathbf{c}_{t}, \mathbf{v}_{t}, \epsilon_{t+1})$ does not depend on $ \epsilon_{t-1}$ and $\mathbf{v}_{t-2}$, we have $\hat{\mathbf{c}}_t=h_\mathbf{c}(\mathbf{c}_t)$, i.e., the latent action variables are block-wise identifiable. 
\end{proof}

% #############################
\section{Implementation Details}

\begin{figure}[t]
    \centering
    \includegraphics[width=0.95\linewidth]{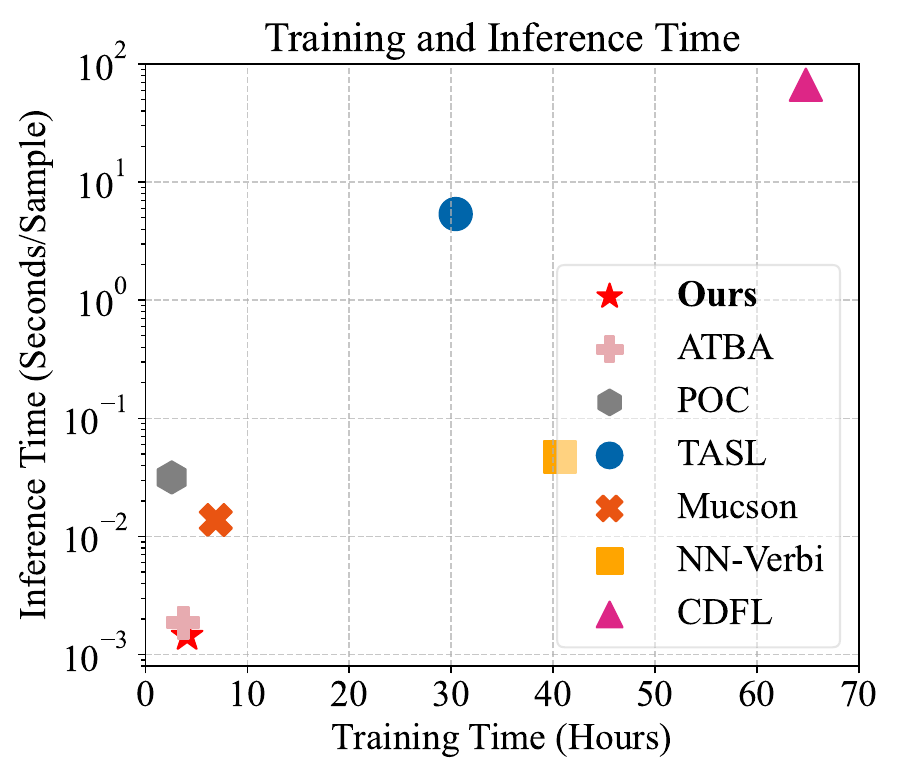}
    \caption{Comparison of training and inference time between the proposed framework and baseline methods. The framework universally outperforms baselines in terms of faster inference speed and shorter training time across all experimental settings.}
    \label{fig:train_inference_time}
\end{figure}

\begin{figure}[t]
    \centering
    \includegraphics[width=0.95\linewidth]{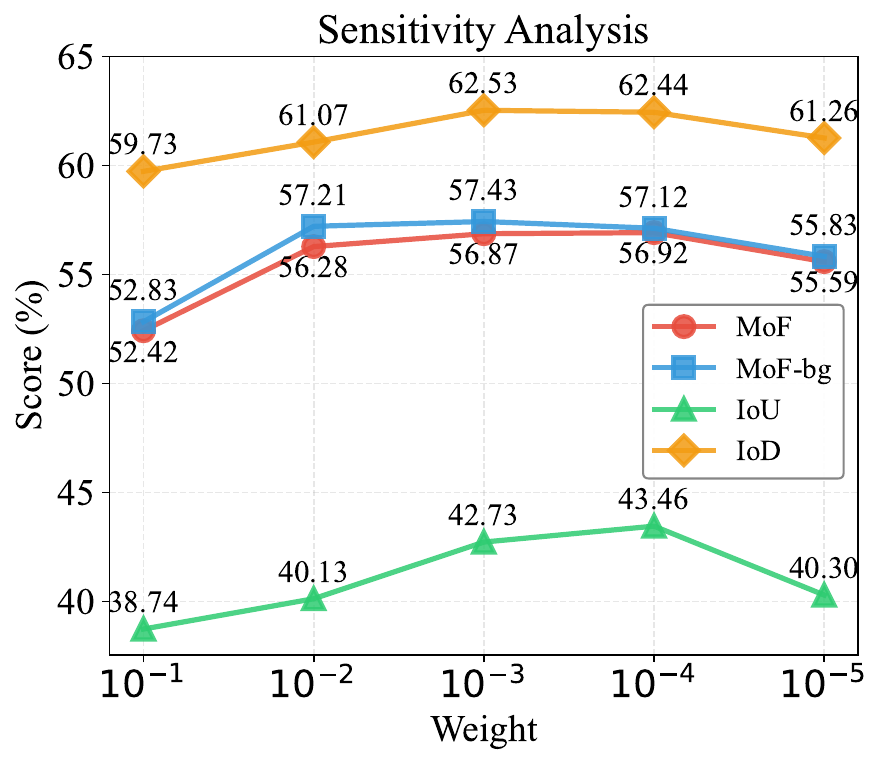}
    \caption{Sensitivity analysis of the hyperparameter for loss term $\mathcal{L}_s$ across datasets. Performance rises notably within a specific hyperparameter range and stabilizes thereafter, verifying the existence of a robust optimal hyperparameter range.}
    \label{fig:sensitivity_analysis}
\end{figure}

\label{app:imple}
\subsection{Model Details}
We choose ATBA\cite {ATBA} as the backbone and Transformer as the encoder and decoder, while the classifier is implemented using linear layers. Architecture details of the proposed method are shown in Table \ref{table:arch_details}.
\subsection{Experiment Details}
We use ADAMW optimizer in all experiments and report the Mean-over-Frames(MoF), Mean-over-Frames without Background(MoF-bg), Intersection-over-Union(IoU) and Intersection-over-Detection(IoD) as evaluation metrics. The experiments are implemented by Pytorch on a single NVIDIA RTX 3090 24GB GPU.

\begin{table}[t]
\caption{Architecture details. L:length of input feature, $|\mathbf{x}_t|$:the dimension of $\mathbf{x}_t$, $|y|$: the numbers of action class, ReLU:Rectified Linear Unit.}
\label{table:arch_details}
\resizebox{\columnwidth}{!}{%
% \begin{tabular}{@{}cHALcl@{}}
\begin{tabular}{@{}clclcl@{}}
\toprule
\multicolumn{2}{c}{Configuration}       & \multicolumn{2}{c}{Description}               & \multicolumn{2}{c}{Output}                                \\ \midrule
\multicolumn{2}{c|}{$1. \phi$}          & \multicolumn{2}{c|}{Transformer}                 & \multicolumn{2}{c}{}                                      \\ \midrule
\multicolumn{2}{c|}{Input $\mathbf{x}_{1:T}$} & \multicolumn{2}{c|}{Input Feature}            & \multicolumn{2}{c}{$Batch Size \times T \times |\mathbf{x}_t|$} \\
\multicolumn{2}{c|}{Dense}              & \multicolumn{2}{c|}{256 neurons}        & \multicolumn{2}{c}{$Batch Size \times T \times 256$}      \\ \midrule
\multicolumn{2}{c|}{$2. \psi$}          & \multicolumn{2}{c|}{Visual Encoder}           & \multicolumn{2}{c}{}                                      \\ \midrule
\multicolumn{2}{c|}{Input $\mathbf{z}_{1:T}$} & \multicolumn{2}{c|}{Feature Latent Variables} & \multicolumn{2}{c}{$Batch Size \times T \times 256$}      \\
\multicolumn{2}{c|}{Dense}              & \multicolumn{2}{c|}{256 neurons; ReLU}        & \multicolumn{2}{c}{$Batch Size \times T \times 256$}      \\ \midrule
\multicolumn{2}{c|}{$3. \eta$}          & \multicolumn{2}{c|}{Action Encoder}          & \multicolumn{2}{c}{}                                      \\ \midrule
\multicolumn{2}{c|}{Input $\mathbf{v}_{1:T}$} & \multicolumn{2}{c|}{Visual Latent Variables}  & \multicolumn{2}{c}{$Batch Size \times T \times 256$}      \\
\multicolumn{2}{c|}{Dense}              & \multicolumn{2}{c|}{256 neurons; ReLU}        & \multicolumn{2}{c}{$Batch Size \times T \times 256$}      \\ \midrule
\multicolumn{2}{c|}{$4. \kappa$}        & \multicolumn{2}{c|}{Visual Decoder}           & \multicolumn{2}{c}{}                                      \\ \midrule
\multicolumn{2}{c|}{Input $\mathbf{v}_{1:T}$} & \multicolumn{2}{c|}{Visual Latent Variables}  & \multicolumn{2}{c}{$Batch Size \times T \times 256$}      \\
\multicolumn{2}{c|}{Dense}              & \multicolumn{2}{c|}{512 neurons; ReLU}        & \multicolumn{2}{c}{$Batch Size \times T \times 512$}      \\ \midrule
\multicolumn{2}{c|}{$5. \xi$}       & \multicolumn{2}{c|}{Action Decoder}          & \multicolumn{2}{c}{}                                      \\ \midrule
\multicolumn{2}{c|}{Input $\mathbf{c}_{1:T}$} & \multicolumn{2}{c|}{Visual Latent Variables}  & \multicolumn{2}{c}{$Batch Size \times T \times 256$}      \\
\multicolumn{2}{c|}{Dense}              & \multicolumn{2}{c|}{512 neurons; ReLU}        & \multicolumn{2}{c}{$Batch Size \times T \times 512$}      \\ \midrule
\multicolumn{2}{c|}{$6. \Gamma$}        & \multicolumn{2}{c|}{Classifier}               & \multicolumn{2}{c}{}                                      \\ \midrule
\multicolumn{2}{c|}{Input $\mathbf{c}_{1:T}$} & \multicolumn{2}{c|}{Action Latent Variables} & \multicolumn{2}{c}{$Batch Size \times T \times 256$}      \\
\multicolumn{2}{c|}{Dense}              & \multicolumn{2}{c|}{$|y|$ neurons; ReLU}      & \multicolumn{2}{c}{$Batch Size \times T \times |y|$}      \\ \bottomrule
\end{tabular}%
}
\end{table}

\subsection{Basic Training Settings}
\textcolor{black}{All model training adopts the AdamW optimizer with hyperparameter configurations set as 1e-4 for weight decay and 5e-4 for initial learning rate; the model is trained for 400/300/200/300 epochs for Breakfast~\cite{kuehne2014language}, Hollywood~\cite{bojanowski2014weakly}, Crosstask~\cite{zhukov2019cross}, GTEA~\cite{gtea}, respectively.}

\subsection{Computational Efficiency Analysis}
\textcolor{black}{This section compares the computational efficiency of the proposed framework against baseline methods, focusing on training and inference time. As shown in Fig.~\ref{fig:train_inference_time}, the framework achieves faster inference speed and shorter training time compared to existing baselines across all experimental settings.}

\subsection{Hyperparameter Sensitivity Analysis}
\textcolor{black}{This section analyzes the sensitivity of the hyperparameter for loss term $\mathcal{L}_s$ across different datasets. As shown in Fig.~\ref{fig:sensitivity_analysis}, the performance of the proposed framework improves significantly as the hyperparameter value increases within a specific range, and then tends to stabilize with further increases in the hyperparameter value.}

% #############################
\section{Additional Experimental Results}
\label{app:experiments}
\subsection{Ablation Study Extensions}
\textcolor{black}{The minor performance gaps across distinct loss function configurations reflect the inherent design principle of our framework: rather than relying on a single loss term to drive performance gains, the model leverages the complementary nature of multiple loss objectives to achieve incremental improvements. An analysis of the extended ablation study results (Table~\ref{tab:more_ablation}) reveals that no individual loss term or simple pairwise combination (e.g., $\mathcal{L}_r + \mathcal{L}_{KL}$) can match the performance of multi-loss combinations, which demonstrates that each loss term targets a unique aspect of the learning process—such as reconstruction fidelity or latent space regularization—and their integration addresses the limitations of single-objective optimization.}

\begin{table}[]
\caption{Experiment results on the GTEA dataset.}
\vspace{-1em}
\label{tab:more_ablation}
\resizebox{\columnwidth}{!}{%
\begin{tabular}{@{}c|ccccccc@{}}
                \toprule
                EXP & $\mathcal{L}_r$ & $\mathcal{L}_s$ & $\mathcal{L}_{KL}$ & $\delta$ & MoF & IoU & IoD \\ \midrule
                1 & & & & & 53.3 & 40.9 & 58.7 \\
                2 & \ding{51} & & & & 54.3 & 38.4 & 61.6 \\
                3 & & \ding{51} & & & 54.6 & 40.3 & 61.6 \\
                4 & & & \ding{51} & & 54.5 & 42.0 & 61.0 \\
                5 & & & & \ding{51} & 53.9 & 41.1 & 59.4 \\
                6 & & \ding{51} & \ding{51} & & 54.7 & 39.3 & 61.4 \\
                7 & \ding{51} & & \ding{51} & & 55.2 & 42.4 & 60.7 \\
                8 & \ding{51} & \ding{51} & & & 55.4 & 41.0 & 60.9 \\
                9 & & \ding{51} & \ding{51} & \ding{51} & 56.4 & 41.9 & 61.6 \\
                10 & \ding{51} & \ding{51} & & \ding{51} & 55.5 & 40.8 & 60.9 \\
                11 & \ding{51} & \ding{51} & \ding{51} & & 56.4 & 41.6 & \textbf{63.3} \\ \midrule
                12 & \ding{51} & \ding{51} & \ding{51} & \ding{51} & \textbf{56.6} & \textbf{42.6} & 62.1 \\ \bottomrule
            \end{tabular}
}
\end{table}

\subsection{Linear Probing Results}
\textcolor{black}{Linear probing results (Fig.~\ref{fig:linear_probing}) demonstrate that action latent variables yield higher F1 scores than raw visual features, with this performance divergence rooted in the distinct semantic encoding characteristics of the two feature forms. Visual features tend to capture transient, task-irrelevant visual fluctuations, whereas action latent variables, regularized by the hierarchical causal generative process, encode stable, temporally consistent semantic information of action states. The superior linear probing performance of action latent variables validates that the learned representations capture action-specific semantic structures instead of superficial visual patterns, which confirms the effective disentanglement of hierarchical latent variables and the efficacy of temporal smoothness regularization on the hierarchical causal data generation process.}
\begin{figure}
    \centering
    \includegraphics[width=0.95\linewidth]{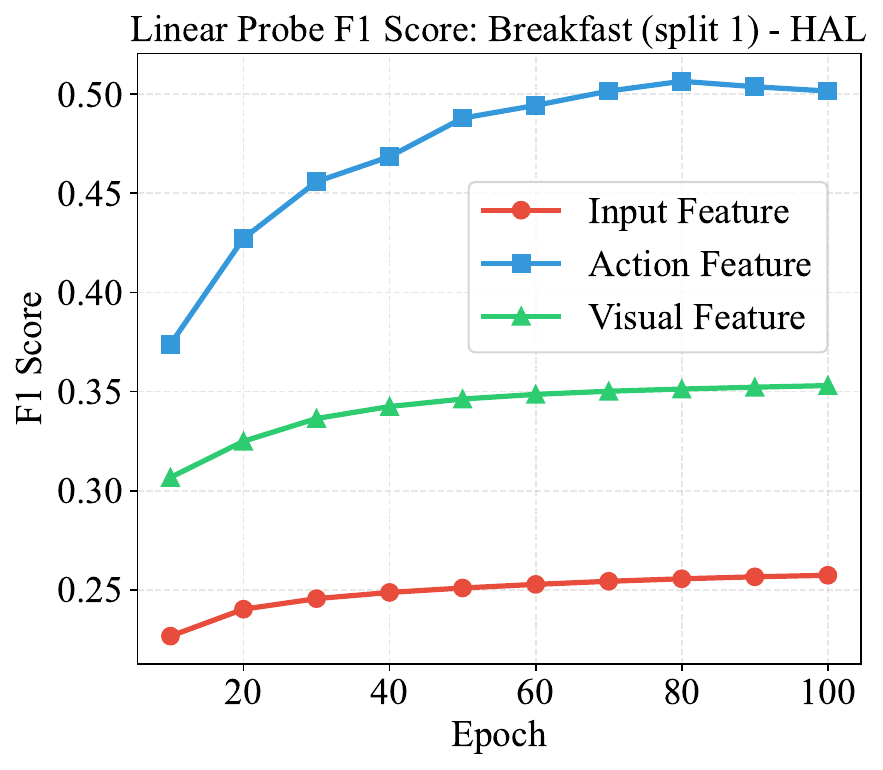}
    \caption{The F1 score with linear probing on Breakfast.}
    \label{fig:linear_probing}
\end{figure}

\subsection{Empirical Motivation}
\textcolor{black}{This section empirically validates the core motivation of the proposed method, as shown in Fig.~\ref{fig:empirical_motivation}. Frame-to-frame visual feature distances exhibit large and frequent fluctuations, while ground-truth action boundaries typically occur after prolonged visual fluctuations instead of coinciding with such variations.}
\begin{figure}[htbp]
    \centering
    \includegraphics[width=0.9\linewidth]{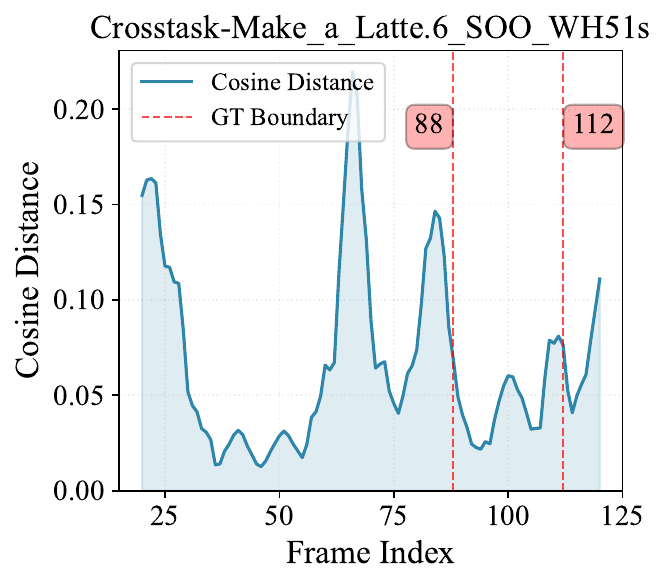}
    \caption{Frame-to-frame visual feature distance and ground-truth action boundary distribution. Visual features show large frequent fluctuations, and action boundaries arise after prolonged visual variations rather than coinciding with them.}
    \label{fig:empirical_motivation}
\end{figure}

% #############################
\section{Discussion of the Identification Results}
\subsection{Bounded, Continues and Positive Density}
\label{app:discussion}
The assumptions used in our theoretical analysis are also common in the field of identification, such as \cite{hu2008instrumental} and \cite{kong2023understanding}. For example, the bounded and continuous density assumption requires that both observed and latent variables are continuous and lie within bounded ranges. In video data, such conditions are typically satisfied because pixel values and latent transitions change smoothly over time and usually remain within reasonable bounds, which ensures the corresponding probability densities are continuous and bounded. The positive density assumption is also achievable with sufficiently rich observational data. Moreover, injective linear operators imply that the information in the latent variables is equivalent to that contained in continuous multi-frame video observations. Please refer to Appendix B for the detailed explanations of the assumptions, how they relate to real-world scenarios, and under which conditions they are satisfied.

\subsection{Injective Linear Operators}
A common practice in nonparametric identification involves leveraging the injectivity property of linear operators \cite{hu2008instrumental,carroll2010identification}. At its core, this idea signifies that distinct input distributions fed into a linear operator will produce distinct output distributions from that same operator. To foster a clearer grasp of this premise, we present a series of illustrations depicting the distributional mapping \(p_{\mathbf{a}} \rightarrow p_{\mathbf{b}}\), where \(\mathbf{a}\) and \(\mathbf{b}\) represent random variables.
\begin{example}[\textbf{Inverse Transformation}]
\label{equ:example_inverse}
    $b = g(a)$, where $g$ is an invertible function.
\end{example}

\begin{example}[\textbf{Additive Transformation}]
\label{equ:additive_transfromation}
    $b = a + \epsilon$, where $p(\epsilon)$ must not vanish everywhere after the Fourier transform (Theorem 2.1 in \cite{mattner1993some}).
\end{example}

\begin{example}
\label{equ:additive_transfromation2}
    $b = g(a) + \epsilon$, where the same conditions from Examples \ref{equ:example_inverse} and \ref{equ:additive_transfromation} are required.
\end{example}

\begin{example}
    [\textbf{Post-linear Transformation}] $b = g_1(g_2(a) + \epsilon)$, a post-nonlinear model with invertible nonlinear functions $g_1, g_2$, combining the assumptions in \textbf{Examples \ref{equ:example_inverse}-\ref{equ:additive_transfromation2}}.
\end{example}

\begin{example}
    [\textbf{Nonlinear Transformation with Exponential Family}]
    $b = g(a, \epsilon)$, where the joint distribution $p(a, b)$ follows an exponential family.
\end{example}

\begin{example}
[\textbf{General Nonlinear Transformation}]
    $b = g(a, \epsilon)$, a general nonlinear formulation. Certain deviations from the nonlinear additive model (\textbf{Example} \ref{equ:additive_transfromation2}), e.g., polynomial perturbations, can still be tractable.
\end{example}

\subsection{Non-singular Jacobia}
This assumption also finds frequent application in the work of \cite{kong2023understanding}. From a mathematical perspective, it indicates that the Jacobian matrix mapping latent variables to observed variables is of full rank. In practical situations, this translates to each latent variable having a corresponding observation. To meet this requirement, those independent latent variables that exert no impact on observations can be readily excluded.

% #############################

\section{More Qualitative Results}
We provide more qualitative results on Breakfast~\cite{kuehne2014language}, Hollywood~\cite{bojanowski2014weakly} and GTEA~\cite{gtea} datasets in figure \ref{fig:more_qualit}.

\begin{figure}[!ht]  % [!ht] 
\centering
\begin{subfigure}[b]{\linewidth}  
  \centering
  \includegraphics[width=0.95\linewidth]{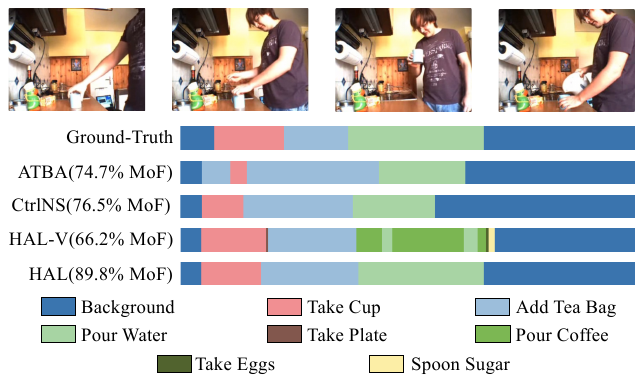}  
  \caption{\textit{P07-stereo01-P07-tea} on Breakfast.}
  \label{fig:qua_P07}
\end{subfigure}
% \vspace{1pt}

\begin{subfigure}[b]{\linewidth}  
  \centering
  \includegraphics[width=0.95\linewidth]{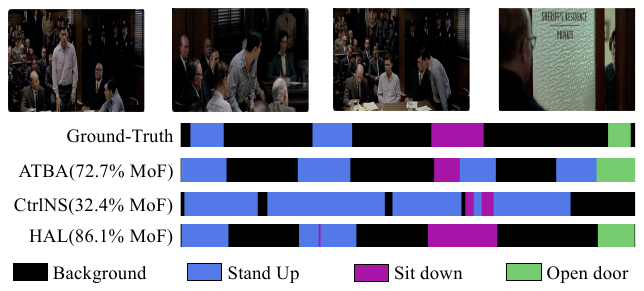}  
  \caption{\textit{0146} on Hollywood.}
  \label{fig:qua_0146}
\end{subfigure}
% \vspace{1pt}

\begin{subfigure}[b]{\linewidth}
  \centering
  \includegraphics[width=0.95\linewidth]{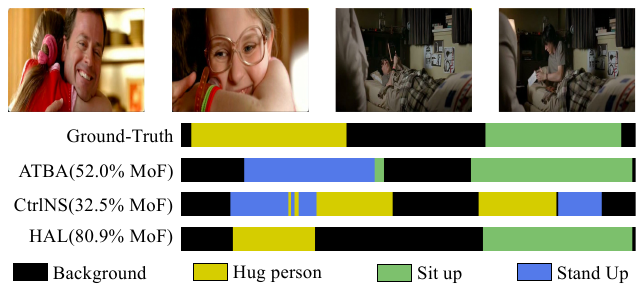}
  \caption{\textit{0782} on Hollywood.}
  \label{fig:qua_0782}
\end{subfigure}
% \vspace{1pt}

\begin{subfigure}[b]{\linewidth}  
  \centering
  \includegraphics[width=0.95\linewidth]{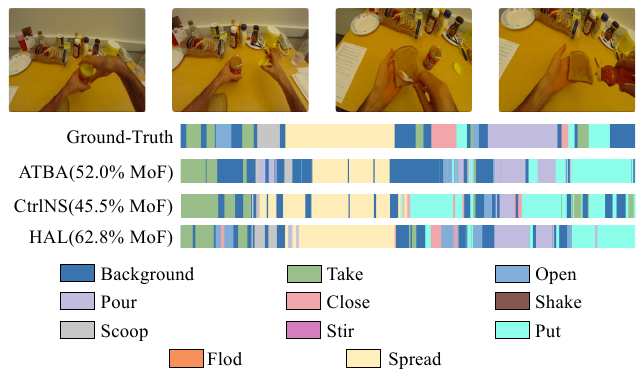}  
  \caption{\textit{S1-Peanut-C1} on GTEA.}
  \label{fig:qua_peanut}
\end{subfigure}
% \vspace{1pt}

\begin{subfigure}[b]{\linewidth} 
  \centering
  \includegraphics[width=0.95\linewidth]{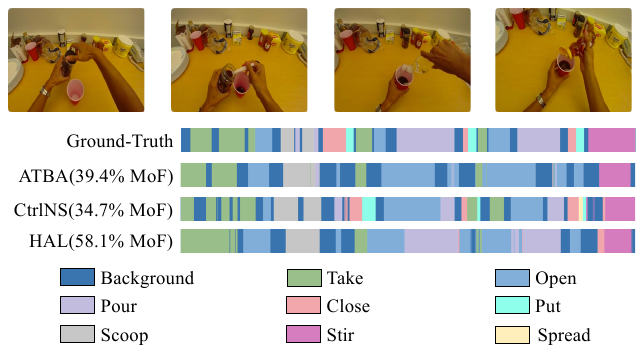}  
  \caption{\textit{S4-CofHoney-C1} on GTEA.}
  \label{fig:qua_cofhoney}
\end{subfigure}

\caption{More qualitative results.}
\label{fig:more_qualit}
\end{figure}

\end{document}